\journal{Neurocomputing}
\begin{document}

\begin{frontmatter}

\title{To Boost or not to Boost: On the Limits of Boosted Neural Networks}



\author[address1,address2]{Sai Saketh Rambhatla}

\author[address2]{Michael Jones\corref{mycorrespondingauthor}}
\cortext[mycorrespondingauthor]{Corresponding author}

\author[address1,address3]{Rama Chellappa}

\address[address1]{Dept. of Electrical and Computer Engineering, University of Maryland, College Park, Maryland, USA,\{rssaketh,rama\}@umiacs.umd.edu}

\address[address2]{Mitsubishi Electric Research Laboratories (MERL), 201 Broadway, Cambridge, MA 02139, USA, mjones@merl.com}
\address[address3]{Dept. of Electrical and Computer Engineering, Johns Hopkins University, Maryland, USA, rchella4@jhu.edu}
\begin{abstract}
 Boosting is a method for finding a highly accurate hypothesis by linearly combining many ``weak" hypotheses, each of which may be only moderately accurate.  Thus, boosting is a method for learning an ensemble of classifiers.  While boosting has been shown to be very effective for decision trees, its impact on neural networks has not been extensively studied.  We prove one important difference between sums of decision trees compared to sums of convolutional neural networks (CNNs) which is that a sum of decision trees cannot be represented by a single decision tree with the same number of parameters while a sum of CNNs can be represented by a single CNN.  Next, using standard object recognition datasets, we verify experimentally the well-known result that a boosted ensemble of decision trees usually generalizes much better on testing data than a single decision tree with the same number of parameters.  In contrast, using the same datasets and boosting algorithms, our experiments show the opposite to be true when using neural networks (both CNNs and multilayer perceptrons (MLPs)).  We find that a single neural network usually generalizes better than a boosted ensemble of smaller neural networks with the same total number of parameters.
\end{abstract}

\begin{keyword}
Neural Networks, Boosting, AdaBoost, Ensembles
\end{keyword}

\end{frontmatter}


\section{Introduction}

We are motivated to study the application of boosting \cite{Schapire1990,Freund1995} to neural networks (CNNs) because of the great success boosting has had in conjunction with decision trees and other classifiers.  Before the recent explosion of research on neural networks and especially convolutional neural networks (CNNs), boosted decision trees were considered to be state of the art.  In fact, the well-known statistician, Leo Breiman, once called boosted decision trees "the best off-the-shelf classifier in the world" \cite{FriedmanEtAl2000}.  

The basic idea of boosting is to form a "strong" classifier (one with high accuracy) using a linear combination (an ensemble) of "weak" classifiers.  The only requirement of a weak classifier is that it has accuracy slightly better than chance.  AdaBoost~\cite{FreundSchapire1997} is probably the best known boosting algorithm and has been widely used by machine learning researchers.  AdaBoost maintains a set of weights per training example and requires that any weak learning algorithm returns a classifier that has a weighted accuracy better than chance on the training examples.  On each round of boosting, the weight on each example is updated with a specific equation that gives less weight to examples the weak classifier got right and more weight to examples it got wrong.  The next weak classifier will be forced to classify more of the incorrect examples correctly.  AdaBoost has been proven to reduce the training error as more weak classifiers are added to the ensemble~\cite{FreundSchapire1997}.

Since boosting can be applied to any classifier, it makes sense to try to combine the power of neural networks with the power of boosting.  The idea of boosting many small CNNs to create a very accurate classifier while avoiding the trial-and-error search for better network architectures is an exciting possibility.  Other researchers have had the same motivation \cite{SchwenkBengio2000,MederaBabinec2009,Deep_Incremental_Boosting,MoghimiEtAl2016}.  In the past, researchers have shown that a boosted ensemble of decision trees or neural networks improves accuracy as the number of decision trees or neural networks increases.  An examination of past Kaggle challenge winners shows the success of neural network ensembles.  Unlike past work, however, we look at the accuracy of an ensemble of classifiers compared to the accuracy of a single classifier of the same type with the {\it same number of total parameters}.  When comparing classifiers of the same type with different numbers of parameters and trained with different algorithms, it is not possible to know whether any accuracy advantages are due to the difference in number of parameters or the difference in training algorithm.  Furthermore, classifiers with more parameters are computationally more expensive so it makes sense to study how to maximize accuracy within a certain parameter budget.  In the case of decision trees, we show that a boosted ensemble is usually much more accurate than a single decision tree with the same number of total parameters.  Surprisingly, however, the same may not hold for neural networks.  The main contribution of this paper is to present theoretical arguments and empirical evidence that a single large neural network is usually more accurate than a boosted ensemble of neural networks with the same number of total parameters.  In terms of accuracy and assuming sufficient training data, it appears that one is better off training a single large network than an ensemble of small networks. 

The remainder of the paper is organized as follows.  Past work on boosting both using decision trees and neural networks is discussed in Section \ref{sec:background}.  An overview of the basic boosting method is given in Section \ref{sec:Adaboost} as well as details on the specific version of multi-class boosting that we use.  In Section \ref{sec:boosted_DTs}, we discuss properties of boosted decision trees and prove that an ensemble of decision trees cannot be represented by a single, larger decision tree.  In Section \ref{sec:boosted_CNNs}, we discuss the properties of boosted CNNs and prove that a sum of CNNs can be represented by a single CNN with the same total number of parameters in contrast to the case with boosted decision trees.  In Section \ref{sec:DTexp}, we present experiments on boosted decision trees using three well-known object recognition datasets.  Analogous experiments are presented in Section 
\ref{sec:CNN_experiments} for boosted neural networks.  These experiments confirm that while boosting makes sense for decision trees, it may not be a win for neural networks.  Finally, in Section \ref{sec:conclusions} we give some conclusions and speculate on why boosting is not as effective for neural networks.

\section{Background and Related Work}
\label{sec:background}

Boosting was first developed in the early to mid 1990's by Schapire and Freund \cite{Schapire1990,Freund1995,FreundSchapire1997,FreundSchapire1996} as a method of creating more accurate classifiers from linear combinations (ensembles) of simpler classifiers.  The AdaBoost algorithm~\cite{FreundSchapire1996} was the first practical boosting algorithm to come from this initial work.  This early work opened up a rich line of further research that explored the potential of boosting and tried to better understand its success.  Some of the key follow on papers are by Friedman, Hastie and Tibshirani~\cite{Friedman1999,FriedmanEtAl2000} who tried to explain AdaBoost in terms of well-understood techniques in statistics.  Friedman proposed gradient boosting~\cite{Friedman1999} as a method of learning a regression function in a stagewise fashion by fitting a function that approximates the residual errors of the training data at each stage and then using a weighted sum of the functions learned at each stage.  Mason et al.~\cite{MasonEtAl1999} further showed the connections between boosting and gradient descent function optimization.  Friedman et al.~\cite{FriedmanEtAl2000} derived the LogitBoost algorithm as a way of understanding AdaBoost in terms of additive logistic regression.
Another notable paper by Zhu et al.~\cite{ZhuEtAl2006} proposed a multi-class boosting algorithm called SAMME (Stagewise Additive Modeling
using a Multi-class Exponential loss function) that is similar to AdaBoost but differs in how it computes the weight of each weak classifier and how it updates the example weights in each round of boosting.

In the early work on boosting, decision trees were typically used as weak classifiers despite the fact that boosting applies to any type of classifier.  Some examples of early work on boosting decision trees include \cite{FreundSchapire1996,Quinlan1996,SchapireSinger1999,ZhuEtAl2006}.  These and many more works demonstrated the interesting result that larger and larger ensembles with growing numbers of parameters usually did not overfit the training data.  In other words, testing error continued to decrease along with training error as the number of weak classifiers increased.   In contrast, it was well known that as the size (number of parameters) of a single decision tree increased, it would eventually overfit, i.e. the testing error would eventually increase as the training error decreased to zero \cite{BreimanEtAl1984}.  Combating this problem is a major reason why boosted decision trees are so useful - they allow the training of classifiers with large numbers of parameters, usually without overfitting.

There have also been many papers on applying boosting to neural networks.  Early papers applied boosting to multilayer perceptrons (MLPs) such as \cite{DruckerEtAl1993,SchwenkBengio1997,SchwenkBengio2000,BanfieldEtAl2007} and showed that boosted ensembles of MLPs got significantly lower errors rates than a single MLP on problems such as character recognition and the UCI classification datasets.  More recently, boosting has been applied to deep convolutional neural networks \cite{MederaBabinec2009,LeeEtAl2015,MoghimiEtAl2016,MoscaMagoulas2017,Deep_Incremental_Boosting}.  Many of these papers have proposed different boosting algorithms that are tailored for CNNs, but they all result in a linear combination of CNNs. For example, Moghimi et al.~\cite{MoghimiEtAl2016} derived a boosting algorithm for CNNs for multi-class classification problems in which the CNN weak learner was trained to estimate the example weights since this was shown to be equivalent to taking a step in the direction of the negative gradient of the error.  Mosca and Magoulas~\cite{MoscaMagoulas2017} use the multiclass AdaBoost.M2 algorithm of Freund and Schapire~\cite{FreundSchapire1996} but initialize each subsequent CNN weak classifier using the weights learned in the previous boosting round which leads to both higher accuracy and faster network training times.

Recent advances in boosting are mainly focused on very fast and efficient implementations of gradient boosting, for example XGBoost~\cite{ChenGuestrin2016} and CatBoost~\cite{DorogushEtAl2017} are popular gradient boosting packages.  For a more comprehensive survey of various boosting algorithms, please see \cite{HeEtAl2019,Schapire2003}.

Past work on boosting neural networks has not analyzed whether an ensemble of MLPs or CNNs is a "win" in terms of decreasing the testing error below what is achievable with a single network with the same number of total parameters as in an ensemble.  Unlike decision trees which are very susceptible to overfitting, this is less of a problem with highly over-parameterized CNNs \cite{AllenZhuEtAl2019,BelkinEtAl2019}.  Typically, training CNNs with more parameters does not result in worse generalization error.  Although it is certainly true that CNNs can overfit especially when trained on very small datasets (such as hundreds of examples), one of the reasons they have revolutionized machine learning is because they usually generalize well even when only trained on thousands of examples with networks containing millions of parameters.  While this phenomenon may be counter-intuitive, there is a growing body of research attempting to explain this surprising behavior \cite{ZhangEtAl2017,PoggioEtAl2018,BelkinEtAl2019,AllenZhuEtAl2019} but the fact remains that networks with more parameters tend to generalize better (see Figures \ref{fig:ensemble}, \ref{fig:baby}, \ref{fig:mlp}, \ref{fig:vgg8}, for example).

So the incentive to use boosting to avoid overfitting may not be so important for neural networks.  Then the question becomes whether boosted ensembles of neural networks can achieve better accuracy than simply using standard deep learning methods on a single, large neural network.  This is the main question we will address here.

\section{Boosting/AdaBoost}
\label{sec:Adaboost}

Boosting is a meta-learning algorithm that uses a base learning algorithm (also called a weak learner) to build an ensemble of (weak) classifiers such that the ensemble classifier (also called a strong classifier) is more accurate than any of the weak classifiers.  The main idea is to assign each training example a weight and use the weak learner to return a weak classifier that minimizes the {\em weighted} error.  The example weights are initially uniform but are updated on each round of boosting so as to give more weight to examples that the previous weak classifier got wrong and less weight to examples that it gets right.  This forces the next weak classifier to get more of the examples right that the previous weak classifier got wrong.  The final strong classifier gives more weight to the more accurate weak classifiers.  Many versions of this basic boosting method have been published with the main differences being how the margin for each example is computed and the weight assigned to each weak classifier in the ensemble.

In our experiments, we tested different multi-class versions of boosting, which all worked similarly, but we found the following version (Algorithm \ref{alg:boost}) which is slightly modified from the AdaBoost.M1 algorithm of Freund and Schapire~\cite{FreundSchapire1997} to perform best.  We follow the notation used in the generalized version of AdaBoost described in
Schapire and Singer (\cite{SchapireSinger1999}). 

In our multi-class version, weak classifiers output a probability distribution over output classes represented as a $C$ element real-valued vector.  On each round, $t$,  of boosting, a weak classifier is found that minimizes a weighted error over examples based on the current weights of examples, $D_t$.  In our case, the weak learning algorithm is either the decision tree training algorithm or a neural network training algorithm.  For training neural networks, we sample batches according to the distribution $D_t$ so that the neural network training algorithm does not have to use example weights directly.

Once a weak classifier is trained, we compute the margin of each example which is essentially how correct the weak classifier is on each example.  We use a margin, $\mathcal{M}$, similar to the one defined in \cite{SaberianVasconcelos2011}, which is equal to the probability of the correct class minus the average probability of the other classes on the training examples.
\begin{equation}
\label{eq:margin}
m_i = \mathcal{M}(\mathbf{h_t}(x_i), y_i) = \mathbf{p_i}[y_i] - \frac{1}{C-1}(1-\mathbf{p_i}[y_i])
\end{equation}
where $y_i$ is the correct label for input sample $x_i$, $C$ is the number of classes and $\mathbf{h_t}(x_i) = \mathbf{p_i} = [p_1, p_2, \cdots p_{C}] \in \mathbb{R}^{C}$ is the probability over the classes output by the weak classifier for input $x_i$.

In step 6 of the algorithm, a weight $\alpha_t$ is chosen as the weight of weak classifier $\mathbf{h_t}$.  In \cite{SchapireSinger1999}, theoretical justification is given for choosing $\alpha_t = \frac{1}{2}\log (\frac{1+r_t}{1-r_t})$ where $r_t$ is the weighted average of the margin computed over the training set.  In our experiments, we found that the sigmoid function, $\alpha_t = \frac{1}{1+e^{-r_t}}$, worked better for both neural networks and decision trees.

Finally, the weights on the examples are updated according to the negative exponential of the margin for each example.  Then the whole process is repeated to train the next weak classifier.

\begin{algorithm}[thb]
\begin{algorithmic}[1]
\STATE Given: $(x_1, y_1)$, $(x_2, y_2), \cdots,$ $(x_m, y_m)$; $x_i \in \mathcal{X}$, $y_i \in \mathcal{Y}, \mathcal{Y} = \{1, ..., C\}$ where $C$ is the number of classes
\STATE Initialize $D_1(i) = \frac{1}{m}$

\FOR {$t=1,2,\cdots,T$}
\STATE Train weak classifier using distribution $D_t$ which yields weak classifier $\mathbf{h_t}: \mathcal{X}\rightarrow \mathbb{R}^C$ where $\mathbb{R}^C$ is a vector of $C$ reals representing a probability distribution over $C$ classes
\STATE Compute margins, $m_i$, for each training example $(x_i, y_i)$ according to equation \ref{eq:margin}
\STATE Choose $\alpha_t \in \mathbb{R}$ 
\STATE Update:
\vspace*{-12pt}
\begin{align*}
    D_{t+1}(i) &= \frac{D_t(i)e^{-\alpha_{t}m_{i}}}{Z_t}
\end{align*}
where $Z_t$ is a normalization factor (chosen to make $D_{t+1}$ a distribution).

\ENDFOR

\STATE Output the final hypothesis
\vspace*{-12pt}
\begin{equation*}
    H(x) = \text{argmax} (\sum_{t=1}^{T} \alpha_t \mathbf{h_t}(x))
\end{equation*}

\end{algorithmic}
\caption{A Multi-class Version of AdaBoost}
\label{alg:boost}
\end{algorithm}

\section{Boosted Decision Trees}
\label{sec:boosted_DTs}

\newtheorem{theorem}{Theorem}
\newtheorem{proof}{\bf Proof:}

Before examining boosted neural networks, we want to first point out some important properties of decision trees ensembles.  For our purposes, we will focus on binary decision trees.  A binary decision tree is defined as a binary tree with internal and leaf nodes.  It takes as input a feature vector or image and outputs a value (which could be a discrete class, a real value or a vector of real values).  Each internal node contains a function of the input which is compared to a threshold to determine whether control continues down the left or right branch.  Each leaf node contains an output value.  The number of leaves in a binary decision tree is equal to the number of internal nodes plus 1.  The number of parameters, $P$, in a decision tree with $I$ internal nodes and $L$ leaf nodes is
\begin{equation}
    P = k I + c L = k (L-1) + c L
\end{equation}
where $k$ is the number of parameters in the internal node function and $c$ is the length of a leaf's output vector.  Thus, the number of parameters in a decision tree is directly proportional to the number of leaves.  We will use the number of leaves for convenience as a substitute for the number of parameters.

One important observation is that the sum of two decision trees each with $P$ leaves cannot be represented by a single decision tree with $2 \times P$ leaves.  This is fundamentally due to the fact that summation is not an operation in a decision tree (in contrast to CNNs and MLPs where weighted sums, i.e. dot products, are a fundamental operation).

\begin{theorem}
The sum of two decision trees, each with $P$ leaves, cannot be represented by a single decision tree containing $2 \times P$ leaves when $P>2$.
\end{theorem}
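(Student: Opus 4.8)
The plan is to prove the statement by exhibiting a concrete pair of decision trees whose sum provably needs more than $2P$ leaves. The key tool is an elementary counting bound: any decision tree with $L$ leaves outputs at most $L$ distinct values, because the value returned on any input is exactly the value stored at the leaf that input reaches.

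First I would work over a two-dimensional input domain such as $[0,1]^2$ and build two axis-aligned trees. Tree $T_1$ has all $P-1$ of its internal nodes test the first coordinate $x_1$ against $P-1$ distinct thresholds, so it partitions the domain into $P$ vertical strips carrying constants $a_1,\dots,a_P$; tree $T_2$ is built the same way but tests $x_2$, partitioning the domain into $P$ horizontal strips carrying constants $b_1,\dots,b_P$. Each tree has exactly $P$ leaves. Next I would observe that the sum $T_1(x)+T_2(x)$ is constant on each of the $P^2$ rectangular cells obtained by intersecting a vertical strip with a horizontal strip, taking the value $a_i+b_j$ on cell $(i,j)$, and I would choose the constants, for instance $a_i=i$ and $b_j = P\cdot j$, so that all $P^2$ of these sums are pairwise distinct. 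Finally, invoking the counting bound: a single decision tree representing $T_1+T_2$ must, on each of the $P^2$ nonempty cells, reach a leaf whose stored value equals the corresponding distinct $a_i+b_j$, so it needs at least $P^2$ leaves; since $P^2 > 2P$ exactly when $P>2$, no decision tree with $2P$ leaves can represent the sum. This also explains why $P>2$ is the natural hypothesis, as $P^2 = 2P$ at $P=2$.

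I do not anticipate a serious obstacle; the one step to state carefully is the counting bound. The only subtlety is that a representing tree could reuse an output value across several leaves, but that can only increase its leaf count, not decrease it: the function genuinely attains $P^2$ distinct values, so at least $P^2$ leaves are forced no matter how the tree is organized. It is worth remarking that the construction in fact shows the sum of $P$-leaf trees can require $\Theta(P^2)$ leaves, so the gap is far larger than a factor of two, which sharpens the contrast with the behavior of sums of CNNs established in the next section.
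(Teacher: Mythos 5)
Your proposal is correct and follows essentially the same counting argument as the paper: a decision tree with $L$ leaves can output at most $L$ distinct values, a sum of two $P$-leaf trees can realize $P \times P$ distinct values, and $P^2 > 2P$ precisely when $P > 2$. The only difference is that you make the argument fully explicit by exhibiting a concrete pair of trees (vertical and horizontal strips with leaf values $i$ and $P\,j$) on which all $P^2$ sums are nonvacuously attained and pairwise distinct, whereas the paper states the $P \times P$ output count informally without such a witness.
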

\begin{proof}
By definition, a decision tree with $2 \times P$ leaves has $2 \times P$ possible outputs.  A sum of two decision trees, each with $P$ leaves and thus $P$ outputs, has $P \times P$ possible outputs.  The first tree could output $P$ possible values and each of these could be added to any of the $P$ possible values of the second tree, for a total of $P \times P$ possible outputs for the sum.  If $P>2$ then $P \times P > 2 \times P$.  Therefore, a decision tree with only $2 \times P$ possible outputs cannot represent the $P \times P$ possible outputs of the sum of two decision trees.
\end{proof}
This theorem generalizes straightforwardly to sums of N decision trees with P leaves which cannot be represented by a single decision tree with $N \times P$ leaves (when $P>1$, $N>2$).

The implication of this theorem is simply that the space of functions that a sum of decision trees can represent is different from the space of functions that a single decision tree with the same number of parameters can represent.  Summing decision tress gives you something you cannot get with a single decision tree.  However, this does not mean that a particular decision tree ensemble trained by AdaBoost (or any other algorithm) will necessarily be more accurate (in terms of training or testing error) than a particular single decision tree with the same number of total leaves.  It also does not imply that the set of classifiers representable by an ensemble of $N$ decision trees with $P$ leaves is a superset of the set of classifiers representable by single decision trees with $N \times P$ leaves.  There may be classifiers that are representable by a single decision tree with $N \times P$ leaves that are not representable by an ensemble of $N$ trees each with $P$ leaves.  Thus, the theorem does not prove that training or testing error for an ensemble of trees is always better than the error for a single tree with the same total number of leaves.  What the theorem does show is simply that the output space is larger for an ensemble of $N$ decision trees with $P$ leaves compared to a single decision tree with $N \times P$ leaves.
The important implication for our purposes is simply that an ensemble of decision trees is not equivalent to any single decision tree with the same number of total parameters. We will show in Section \ref{sec:boosted_CNNs} that this is in contrast to the case with ensembles of neural networks.


Another motivation for choosing decision tree ensembles over single, large decision trees is the fact that boosted ensembles of decision trees are much less susceptible to overfitting.  The fact that decision trees tend to overfit is well-known and well-established \cite{BreimanEtAl1984,Quinlan1993}.  A theoretical explanation for why boosted ensembles tend to be robust to overfitting, on the other hand, is still being debated.  Some proposals for a theoretical justification include the margin explanation of \cite{SchapireEtAl1998,ReyzinSchapire2006} and \cite{GaoZhou2013} and the statistical view of boosting of \cite{FriedmanEtAl2000}.
The important point here, however, is simply that there is a lot of empirical evidence showing that in many cases, boosted ensembles of decision trees avoid overfitting and generalize well even when large numbers of trees (and thus large numbers of parameters) are used \cite{ViolaJones2004,SchapireSinger1999,ViolaEtAl2003,BourdevBrandt2005} in contrast to the overfitting that is observed when very large decision trees are trained.  We will show in the next section that these same motivations do not apply to ensembles of CNNs or MLPs.

\section{Boosted CNNs}
\label{sec:boosted_CNNs}

\begin{figure*}[thb]
    \centering
    \includegraphics[width=\linewidth]{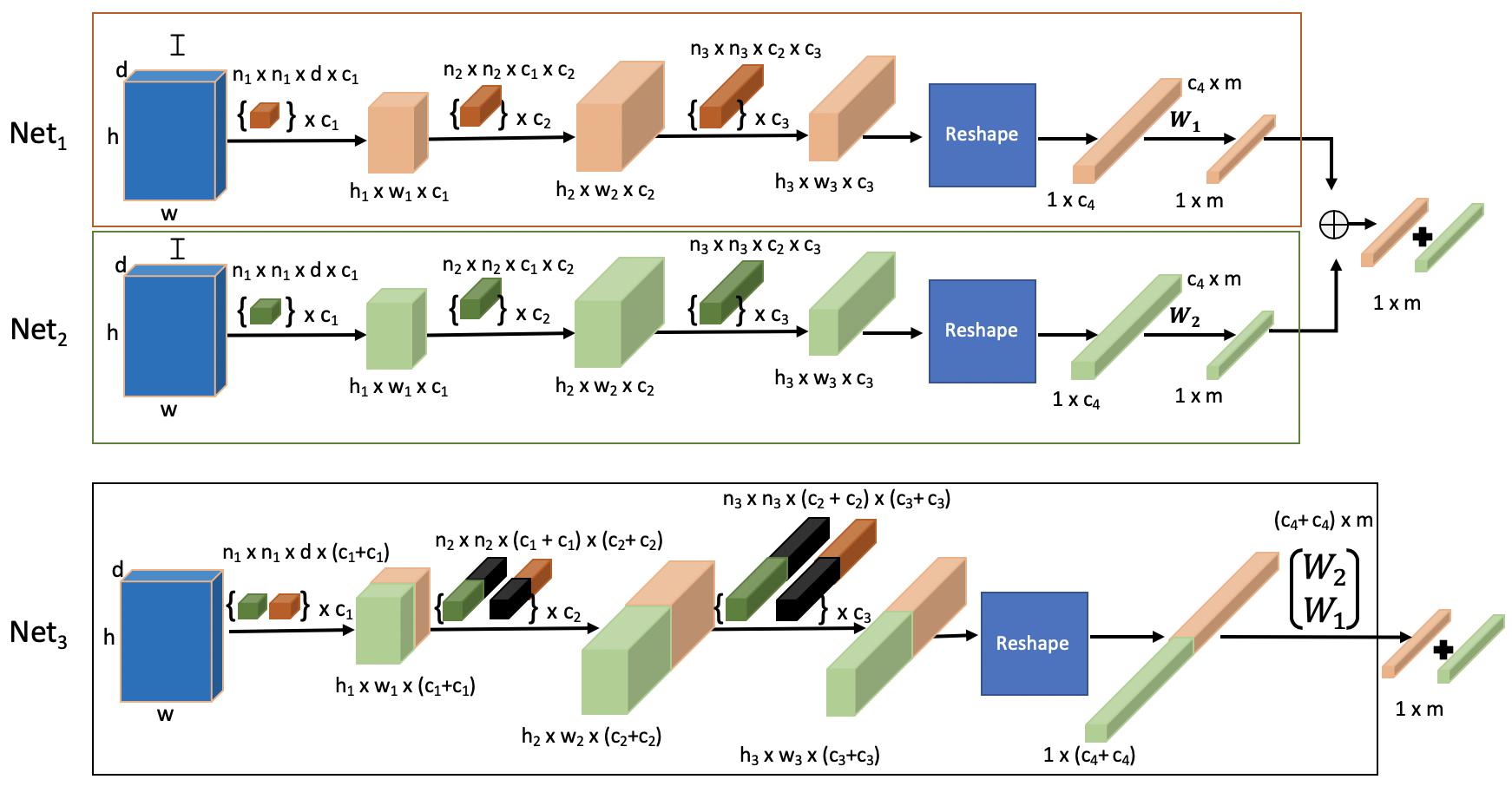}
    \caption{An illustration showing how to construct a single CNN with $2P$ parameters that is equivalent to the sum of two CNNs with $P$ parameters.  The arrows with cubes above them represent convolutional layers with the cubes representing the convolutional tensors.  The cubes at the beginning and end of the arrows represent feature maps (or the input image).  Black cubes represent zeros. The reshape block simply vectorizes the feature maps into a single 1-D vector.  $W_1$ and $W_2$ represent fully connected weight matrices.}
    \label{fig:const}
\end{figure*}

In contrast to boosted decision trees, we will prove in this section that boosted CNNs are not more expressive than a single CNN with the same number of total parameters.  

Consider two CNNs each containing P total parameters (weights), consisting of a number of convolutional layers with point-wise nonlinear activation functions as well as possibly containing fully connected layers at the end of the network.  For now, assume both CNNs have identical network architectures (but different parameter values).

We will show that the sum of the two CNNs can be exactly represented by a single CNN network with $2 \times P$ parameters.  The proof is by construction. Essentially, the single CNN simply runs both smaller CNNs in parallel and adds their results.  Fundamentally, this is possible because summation is a basic function of CNNs (in contrast to decision trees).

\begin{theorem}
A sum of 2 convolutional neural networks with $P$ parameters each can be represented by a single CNN with $2 \times P$ parameters.

\end{theorem}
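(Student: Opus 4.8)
The plan is to prove the statement by explicit construction: I will exhibit a single CNN that runs the two given networks in parallel within disjoint groups of channels and then forms their weighted sum in its last layer. Write the two networks as $N_1$ and $N_2$; by assumption they share an architecture, so each consists of some convolutional layers (with pointwise activations, and possibly spatial pooling or strides) followed by some fully connected layers, differing only in their parameter values. The combined network $N$ will be built so that, after every layer, the feature tensor it holds is exactly the channel-wise concatenation of the corresponding feature tensors that $N_1$ and $N_2$ produce on the same input.

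First I would fix the convolutional part. The input image is passed to $N$ unchanged. For the first convolutional layer I stack the first-layer convolution tensors of $N_1$ and $N_2$ along the output-channel axis, so that its output is the concatenation of the first feature maps of $N_1$ and of $N_2$. For every later convolutional layer I use a block-diagonal convolution tensor (equivalently, a grouped convolution with two groups): the block from the $N_1$-channels to the $N_1$-channels is the corresponding tensor of $N_1$, the block from the $N_2$-channels to the $N_2$-channels is that of $N_2$, and the two off-diagonal blocks are identically zero. Because the activation functions are pointwise and spatial pooling and striding act on each channel independently, applying them to the concatenated tensor coincides with applying them separately inside each branch; hence the branches never interact, and by induction layer $\ell$ of $N$ outputs the concatenation of the layer-$\ell$ outputs of $N_1$ and $N_2$. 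The fully connected hidden layers are handled identically, with block-diagonal weight matrices.

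Next I would form the output. After the last hidden layer $N$ holds the concatenation $[f_1; f_2]$ of the penultimate feature vectors of the two networks, and, taking each weak-classifier output $\mathbf{h_j}(x)$ to be the pre-softmax logit vector $W_j f_j$ (consistent with the $\operatorname{argmax}$ in Algorithm \ref{alg:boost}), I use $\alpha_1 W_1 f_1 + \alpha_2 W_2 f_2 = [\,\alpha_1 W_1 \mid \alpha_2 W_2\,]\,[f_1; f_2]$ and let the final layer of $N$ be exactly this single linear map. Now count parameters: every diagonal block at every layer of $N$ is a verbatim copy of a parameter block of $N_1$ or $N_2$, the off-diagonal blocks are forced to zero and therefore are not free parameters (just as in a grouped convolution, which stores only its blocks), and the output layer uses the parameters of the two original output layers; so $N$ has at most $2P$ parameters. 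The same argument with an $N$-fold block-diagonal structure proves the $N$-network, $NP$-parameter generalization claimed in the text, and since equality of the two architectures was used only for notational convenience, networks of differing architecture are covered with minor additional bookkeeping.

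The step I expect to be the real obstacle is the parameter accounting, not the wiring. A naive "concatenate the two conv layers" argument yields a dense convolution tensor between the branches whose off-diagonal blocks would roughly quadruple the count; the construction must therefore commit to block-diagonal / grouped convolutions and to the standard convention that forced-zero off-diagonal blocks are not counted as parameters, which is exactly what the illustration in Figure \ref{fig:const} encodes with its black (zero) cubes. A secondary subtlety is the output normalization: the clean $2P$ bound uses that the $\alpha_t \mathbf{h_t}$ are combined before any softmax; if instead each weak learner is required to emit a normalized distribution, one appends a single small linear layer that sums them, adding only $O(C)$ parameters and hence not affecting the statement at the level of the dominant convolutional weights.
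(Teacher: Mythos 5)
Your construction is essentially identical to the paper's proof: concatenate the first-layer filters along the output-channel axis, make every later layer block-diagonal with forced (non-trainable) zeros between branches, and merge the fully connected output weights, so the combined network runs both CNNs in parallel and sums them with no extra parameters. Your explicit parameter accounting and handling of the $\alpha_t$ weights in the final linear map are welcome elaborations of points the paper treats more briefly, but the approach is the same.
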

\begin{proof}
\label{proof:CNN}

We prove the theorem by a simple construction.

Let {\em Net}$_1$ and {\em Net}$_2$ represent two CNNs with the same architecture containing $P$ parameters each.  We create a new network, {\em Net}$_3$, that is equal to {\em Net}$_1$ + {\em Net}$_2$ and contains $2 \times P$ parameters.  Figure \ref{fig:const} illustrates an example CNN with 3 convolutional layers (possibly including pooling layers) followed by a reshaping layer that creates a one-dimensional feature vector followed by a fully-connected layer.

For the first convolutional layer of {\em Net}$_3$, we simply concatenate the first $n_1 \times n_1 \times d \times c_1$ convolutional tensors of {\em Net}$_1$ and {\em Net}$_2$ to yield a $n_1 \times n_1 \times d \times (c_1 + c_1)$ convolutional tensor for {\em Net}$_3$.  The feature maps output by the first convolutional layer of {\em Net}$_3$ will be of dimension $h \times w \times (c_1 + c_1)$ and will be the feature maps of the first convolutional layers of {\em Net}$_1$ and {\em Net}$_2$ concatenated (each of dimension $h \times w \times c_1$).

For the second and later convolutional layers of {\em Net}$_3$, we take the $n_2 \times n_2 \times c_1 \times c_2$ convolutional tensor of {\em Net}$_1$ and pad the third dimension of the tensor with zeros at the end to get an $n_2 \times n_2 \times (c_1 + c_1) \times c_2$ tensor.  We do the same with the $n_2 \times n_2 \times c_1 \times c_2$ tensor of {\em Net}$_2$, except we add the zeros to the beginning of the third dimension to get another $n_2 \times n_2 \times (c_1 + c_1) \times c_2$ tensor.  Then we concatenate these two tensors to get an $n_2 \times n_2 \times (c_1 + c_1) \times (c_2 + c_2)$ tensor for {\em Net}$_3$ with the same number of parameters as the corresponding layers of {\em Net}$_1$ and {\em Net}$_2$.  To make it clear that the zeros in the tensors of {\em Net}$_3$ are not trainable parameters, all zeros signify the lack of a connection between neurons in {\em Net}$_3$.

For the fully connected layer, the weight matrices $W_1$, $W_2$ are concatenated along the rows to form the fully connected layer of {\em Net}$_3$.

\end{proof}

While the proof is shown for two networks with the same architecture, it can easily be extended to heterogeneous architectures.  The construction also generalizes to sums of many networks.  Furthermore, the construction for the fully connected layer shows that this construction would apply to a network consisting solely of fully-connected layers, i.e. an MLP.

This proof shows that there are no functions that can be represented by a CNN (or MLP) ensemble that cannot also be represented by a single CNN (or MLP) with the same number of parameters.  In Section \ref{sec:boosted_DTs}, we found the opposite to be true in the case of decision tree ensembles.  However, this proof does not mean that a particular CNN ensemble trained by AdaBoost or some other algorithm cannot be more accurate than a particular single CNN with the same number of parameters.  This is an empirical question which we explore through various experiments in Section \ref{sec:CNN_experiments}.  Proof \ref{proof:CNN} simply tells us that the architecture of CNN ensembles is not adding anything that we cannot already get with a single CNN.

The other reason that we previously discussed for the success of decision tree ensembles is that they tend to avoid the tendency to overfit that is observed in single, large decision trees.  Once again, this motivation is not very compelling for ensembles of CNNs because many CNN architectures already exhibit a surprising robustness to overfitting despite being massively overparameterized.  The computer vision literature is full of papers showing good generalization accuracy using overparameterized deep networks on many different datasets.  While a theoretical understanding of this behavior is still an active research topic, there have been some attempts at an understanding, including the works of \cite{BelkinEtAl2019}, \cite{PoggioEtAl2018}, and \cite{ZhangEtAl2017}.


\section{Decision Tree Experiments}
\label{sec:DTexp}

In this section we present some experiments on image classification datasets comparing boosted ensembles of decision trees to single decision trees with the same number of total leaves.  There have been very many papers published that plot the error rate of a boosted ensemble of decision trees as a function of the number of weak classifiers.  The vast majority of these plots show the error rate continuing to decrease as more weak classifiers are added to the ensemble.  We also examine the error rates of single decision trees as the number of leaves increases.  A comparison of the error rates of boosted decision tree ensembles and single decision trees with the same number of leaves is interesting here because it confirms that single decision trees often overfit as the number of leaves increases while boosted decision tree ensembles often do not, and as a consequence that boosted ensembles generalize much better than single decision trees.  We will show in Section~\ref{sec:CNN_experiments} that boosted neural network ensembles exhibit very different behavior.

We use binary decision trees with multi-class outputs.  The output is a probability distribution over all possible output classes.  Each node computes a Haar-like filter on the input image and thresholds the filter value to determine whether the input goes to the left or right child node.  When a leaf node is reached, the output class is the class with the highest probability.  The probability distribution for any leaf node is computed from the training examples that fall into that leaf as the count of each class divided by the number of training examples in the leaf.

We use a set of Haar-like filters that are very similar to the Haar-like filters used in \cite{ViolaJones2004}.  We use 2-rectangle, 3-rectangle and 4-rectangle Haar-like filters as illustrated in Figure \ref{fig:haarfilters}.  The value of a Haar-like filter applied to an image is the sum of the pixel values (in a single color channel) in the white rectangles minus the sum of the pixel values in the gray rectangles.

For the CIFAR-10 and CIFAR-100 datasets in which each example is 32x32 pixels, we used 2912 Haar-like filters sampled from all possible 2-rectangle, 3-rectangle and 4-rectangle filters that can fit within a 32x32 pixel image.  The minimum rectangle size within any filter was 5 pixels.  Because a filter can be applied to any of the three color channels independently, there were effectively $3 \times 2912 = 8736$ filters used for training.

For the MNIST dataset with 28x28 pixel examples, we used 3512 Haar-like filters with a minimum size of 4 pixels for any single rectangle within a filter.  There is only a single gray-level channel for MNIST.

\begin{figure*}[t]
\centering
\includegraphics[width=0.9\textwidth,keepaspectratio]{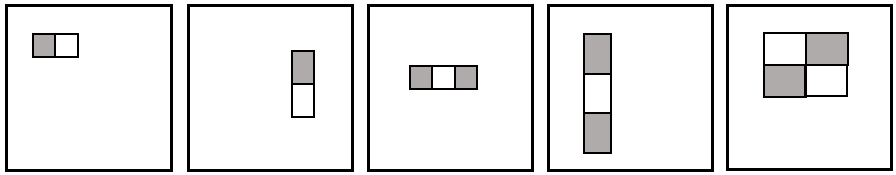}
\caption{Examples of 2, 3 and 4 rectangle Haar-like filters.  The sum of pixel values in shaded rectangles are subtracted from the sum of pixel values in white rectangles.  Each Haar-like filter has two versions: one with absolute value and one without.  A Haar-like filter can be applied to any color channel in the case of color input images (CIFAR-10 and CIFAR-100).}
\label{fig:haarfilters}
\end{figure*}

We use the following decision tree learning algorithm to build a tree with a fixed number of leaf nodes.  Initially, a leaf node is created containing all of the training examples.  On each iteration of the learning algorithm, the leaf node with the minimum "peak" is chosen to split where "peak" is defined as the probability of the highest probability class minus the average probability of all other classes.  The peak value is similar to negative entropy.  To split a node, all Haar-like filters in the given filter set are used to find the optimal filter and threshold that splits the examples in the node so that the sum of the peaks for the examples going into the children nodes is maximized. The Haar-like filter and threshold that maximize the sum of peaks of the children nodes is chosen as the decision function for the node.  Splitting of nodes continues until the desired number of leaves is reached.

We use the multi-class boosting algorithm described in Section \ref{sec:Adaboost}.  We conducted experiments on three standard image classification datasets: MNIST, CIFAR-10 and CIFAR-100.

\textbf{MNIST}: The MNIST dataset \cite{Lecun98gradient-basedlearning} is a handwritten digit (from 0 to 9) recognition task consisting of $28 \times 28$ pixel gray scale images.  The dataset contains 60,000 training images (6000 images of each digit) and 10,000 testing images (1000 images of each digit).

\textbf{CIFAR-10}: The CIFAR-10 dataset \cite{Krizhevsky09learningmultiple}  represents an object recognition task.  It contains 60,000 $32\times 32$ color images with each image containing one of 10 different classes. The 10 different classes represent airplanes, cars, birds, cats, deer, dogs, frogs, horses, ships, and trucks. There are 6,000 images of each class from which 5,000 are used for training and 1,000 are used for testing.

\textbf{CIFAR-100}: This dataset \cite{Krizhevsky09learningmultiple} represents a similar object recognition task as the CIFAR-10 dataset, except there are 100 object classes instead of 10.  Each class contains 600 color images.
Each image is $32\times32\times3$, and the 600 images are divided into 500 training, and 100 test for each class.

For each dataset, we trained boosted decision trees for 20 rounds with a fixed number of leaves (64 for MNIST and CIFAR-10, 128 for CIFAR-100) in each weak classifier. We also trained a series of single decision trees with the same number of total leaves as in each ensemble.  To train a single decision tree with 128 leaves, for example, we start with one already trained with 64 leaves and continue the decision tree training algorithm until 128 leaves are reached.  Since there is no randomness in the training algorithm, each experiment is run only once. (Rerunning the same experiment would yield the exact same classifier.)

\begin{figure*}[!ht]   
\centering
\subfloat[]{\includegraphics[width=0.33\textwidth, keepaspectratio]{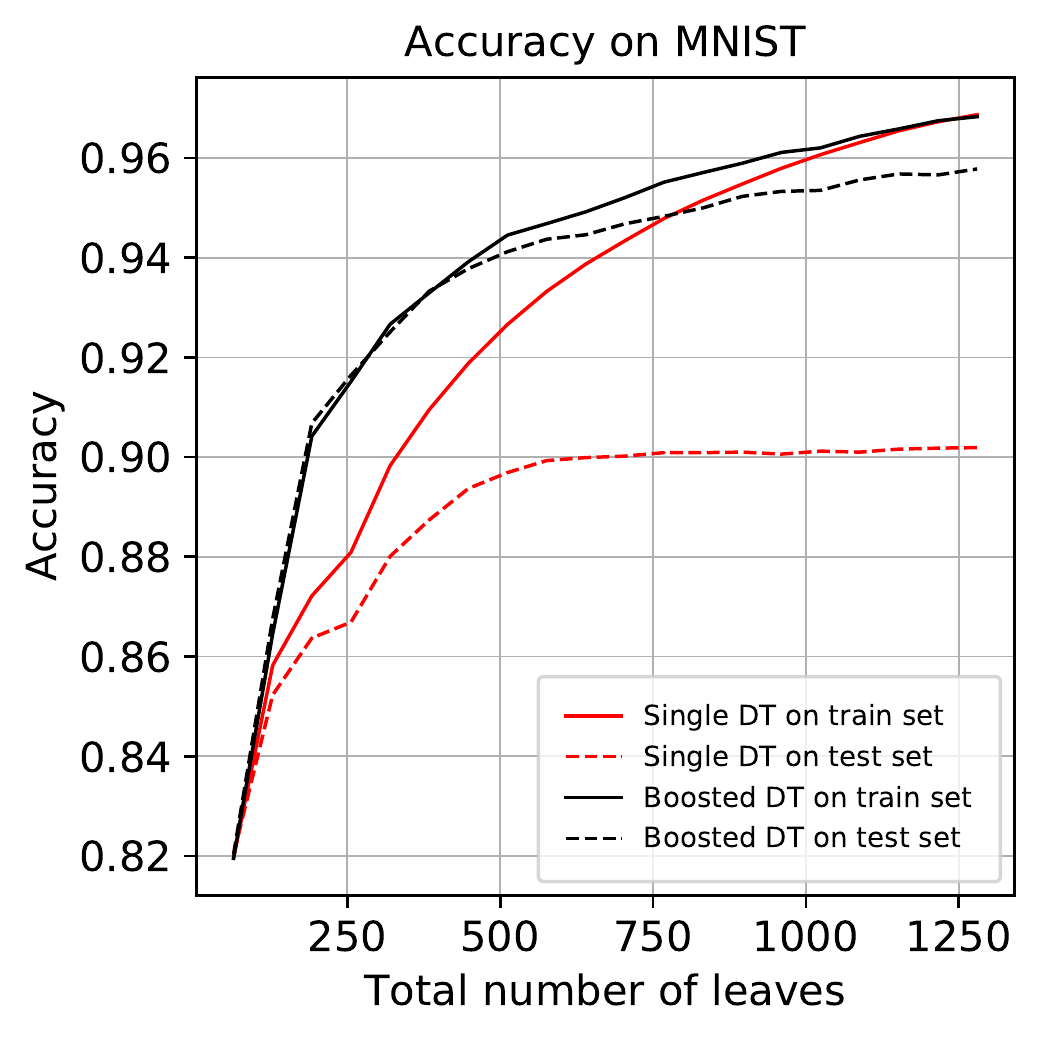}\label{fig:mnist_dt}}
\subfloat[]{\includegraphics[width=0.33\textwidth, keepaspectratio]{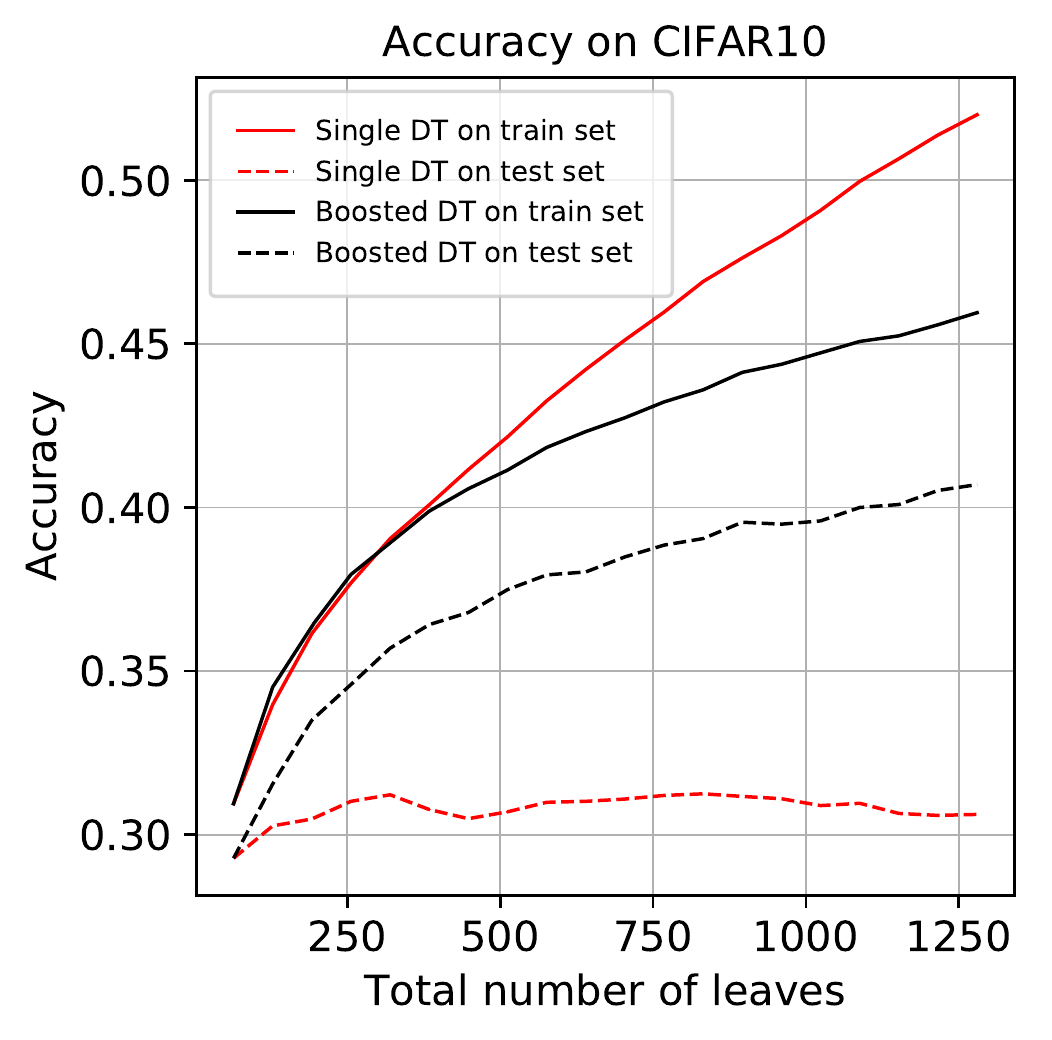}\label{fig:cifar10_dt}}
\subfloat[]{\includegraphics[width=0.34\textwidth, keepaspectratio]{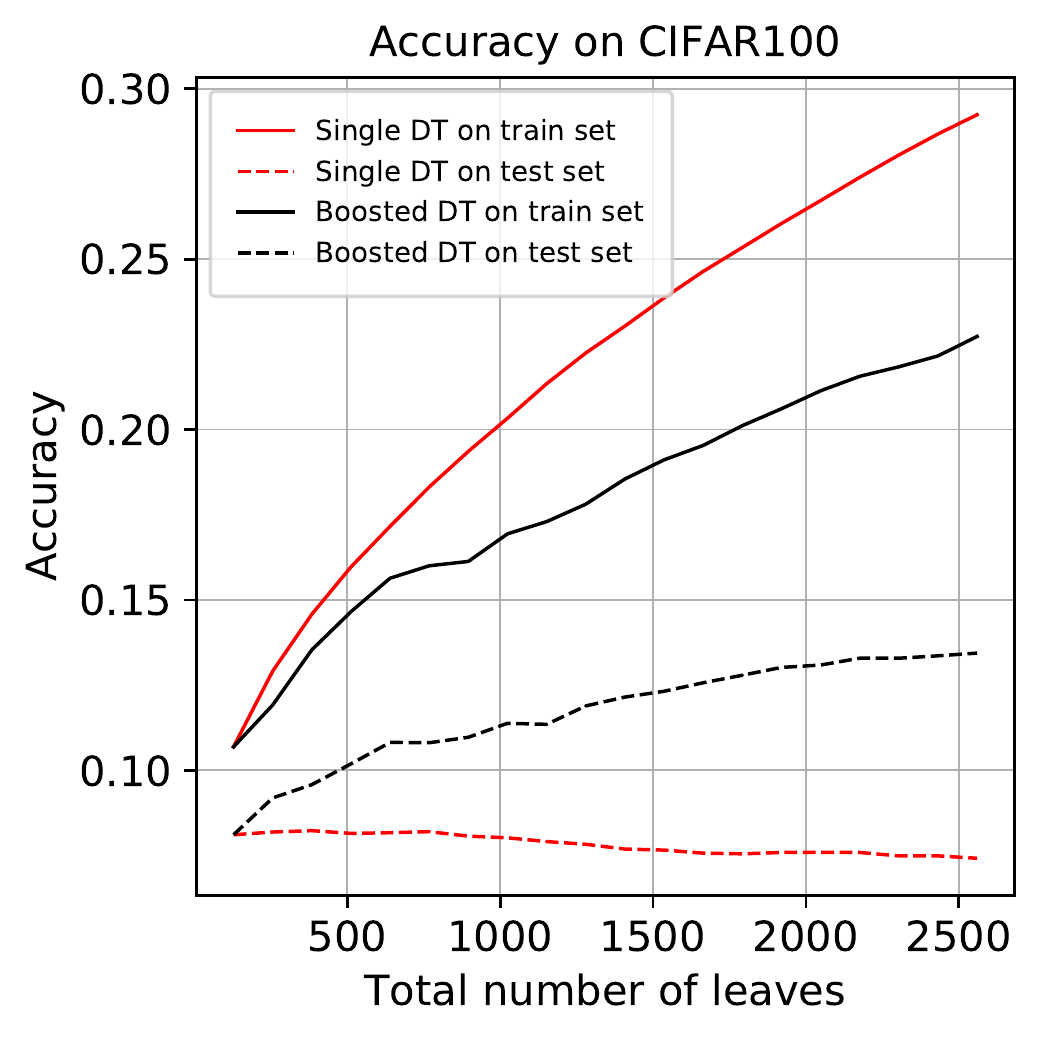}\label{fig:cifar100_dt}}
\caption[Optional caption for list of figures 5-8]{Classification accuracy on MNIST [\ref{fig:mnist_dt}], CIFAR-10 [\ref{fig:cifar10_dt}] and CIFAR-100 [\ref{fig:cifar100_dt}] training and testing sets for single decision tree and boosted decision tree versus total number of leaves in the tree/ensemble.}
\label{fig:dt}
\end{figure*}

In each of the Figures \ref{fig:mnist_dt}-\ref{fig:cifar100_dt}, the red curves are for single decision trees and the black curves are for boosted decision trees.  The solid curves show accuracy on the training set and the dashed curves show accuracy on the testing set.

Figure \ref{fig:mnist_dt} shows that on the MNIST dataset, the boosted decision trees are significantly more accurate on the testing set than the single decision trees with the same number of leaves.  On the training set, the boosted and single trees are more comparable, with the boosted trees being a little more accurate until the number of leaves becomes greater than 1000 or so.

On CIFAR-10 (Figure \ref{fig:cifar10_dt}) and CIFAR-100 (Figure \ref{fig:cifar100_dt}), we again see that the boosted decision trees are significantly more accurate on the testing sets, but not on the training sets as the number of leaves grows.  The fact that the single decision trees continue to improve accuracy on the training set while worsening accuracy on the testing sets is clear evidence of overfitting.  The boosted decision trees, on the other hand, avoid overfitting as both training and testing accuracy continue to increase with increasing numbers decision trees.

These experiments confirm that boosted decision trees usually generalize better than single decision trees with the same number of total leaves.  They also confirm that training single large decision trees is prone to overfitting while boosted ensembles of decision trees are resistant to overfitting.

\section{Neural Network Experiments}
\label{sec:CNN_experiments}


In this section we explore empirically whether boosted neural networks are similar to boosted decision trees and result in better accuracy than simply using a single network with an equivalent number of parameters.  We experiment with three network architectures described below and three datasets.  As with the experiments using decision trees, we use the CIFAR-10 and CIFAR-100 object recogniton datasets.  Because even the small base networks we use achieve nearly 100\% training accuracy on MNIST, we instead use the Street View House Numbers (SVHN) dataset for our third dataset.  SVHN \cite{37648} is a real-world image dataset obtained from house numbers in Google Street View images.  The dataset contains over $600,000$ training images, and about $20,000$ test images, each of size $32 \times 32$ pixels.

\begin{table}[h]
  \centering
  \scriptsize
  \renewcommand{\arraystretch}{1.0}
  \renewcommand{\tabcolsep}{1.4mm}
  \resizebox{\linewidth}{!}{
  \begin{tabular}{c|cc||c|cc}
  \toprule
  \multicolumn{3}{c||}{\textbf{SVHN \& CIFAR-10}} & \multicolumn{3}{c}{\textbf{CIFAR-100}}\\ \hline
  \textbf{Ensemble} & \multicolumn{2}{c||}{\textbf{Single CNN}} & \textbf{Ensemble} &  \multicolumn{2}{c}{\textbf{Single CNN}} \\ 
   \textbf{No. of Parameters}& \textbf{Hidden Dimensions} & \textbf{No. of Parameters} & \textbf{No. of Parameters} &\textbf{Hidden Dimensions} & \textbf{No. of Parameters}\\
    \midrule
    $5954$ & $[6, 16, 32]$ & $5954$ & $8834$ & $[6, 16, 32]$& $8834$\\
    $11908$ & $[12, 16, 64]$ & $11908$ & $17668$ & $[12, 16, 64]$& $17668$\\
    $17862$ & $[18, 16, 96]$ & $17862$& $26502$ & $[18, 16, 96]$& $26502$\\
    $23816$ & $[24, 16, 128]$ & $23816$& $35336$ & $[24, 16, 128]$& $35336$\\
    $29770$ & $[30, 16, 160]$ & $29770$& $44170$ & $[30, 16, 160]$& $44170$\\
    $35724$ & $[14, 39, 82]$ & $34894$& $53004$ & $[14, 39, 105]$& $52647$\\
    $41678$ & $[15, 42, 88]$ & $40219$& $61838$ & $[15, 42, 114]$& $60567$\\
    $47632$ & $[16, 45, 95]$ & $46337$& $70672$ & $[16, 45, 122]$& $68522$\\
    $53586$ & $[18, 48, 100]$ & $52462$& $79506$ & $[18, 48, 129]$& $76890$\\
    $59540$ & $[18, 50, 106]$ & $57346$& $88340$ & $[18, 50, 136]$& $83386$\\
  \bottomrule
  \end{tabular}
  }
  \caption{Table enumerating hidden dimensions and number of trainable parameters of the ensemble and single CNN classifier on CIFAR-10/SVHN and CIFAR-100.}
  \label{tab:cnn}
\end{table}

\begin{figure}
    \centering
    \includegraphics[width=\linewidth]{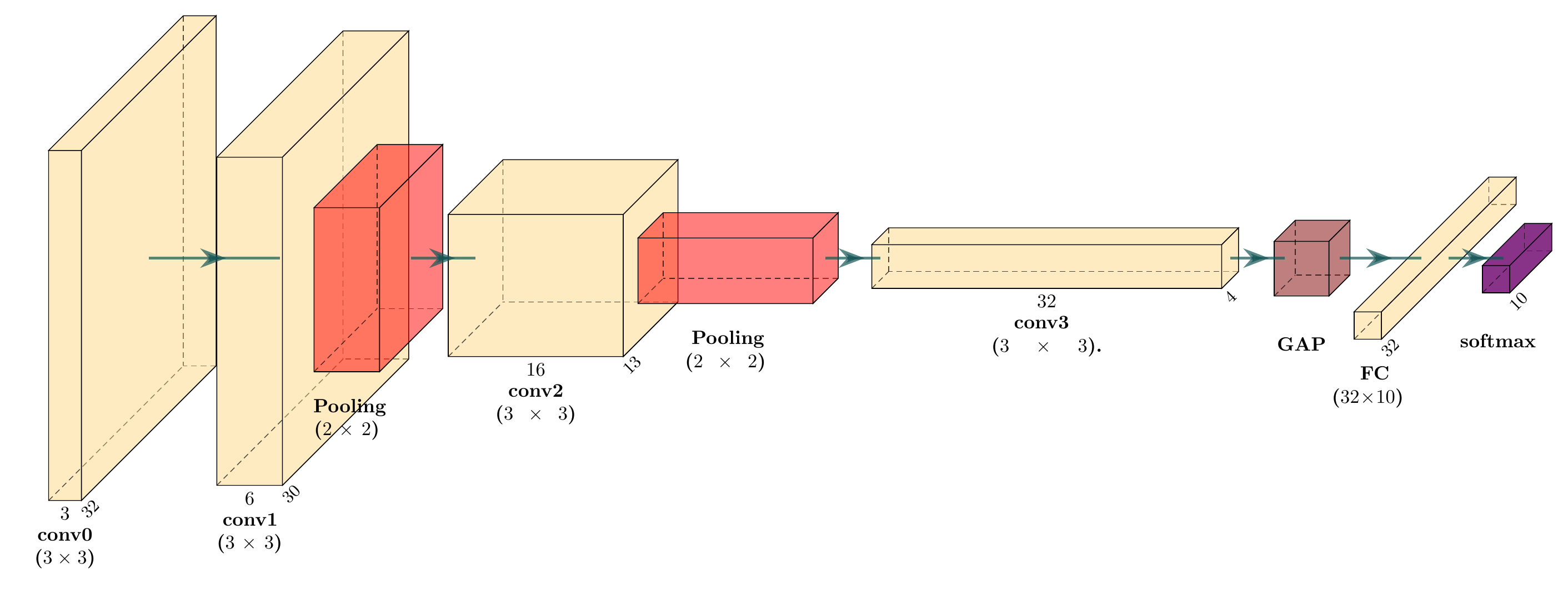}
    \caption{CNN architecture for CIFAR-10/SVHN: The network consists of three convolution layers with $3\times 3$ filters, $0$ padding and stride $1$. The convolution layers are followed by a ReLU non-linearity. We use max pooling in this work with a filter size of $2\times 2$, no padding and stride $2$ which results in a downsampling of the features by a factor of 2. The three convolution layers have 6, 16 and 32 filters respectively. Finally, a Global Average Pooling (GAP) is applied and a fully connected (fc) outputs logits over the number of classes.}
    \label{fig:cnn_arch}
\end{figure}

\textbf{CNN}: The first architecture is a CNN base classifier. We adopt a similar architecture as LeNet \cite{Lecun98gradient-basedlearning}, as a base learner, for our experiments and is shown in Fig. \ref{fig:cnn_arch}.
Boosting such a model for $N$ rounds makes the total number of parameters $N$ times the number of parameters of the base classifier. For training a single model with an equal number of parameters, we increase the number of filters in each hidden layer until the total number of parameters is less than or equal to the required number.  To increase the number of parameters in the CNN for creating larger single networks, we add convolutions (width) to each existing layer but do not add new layers (depth). 

For a single CNN for which we want $n$ times more parameters than the base CNN, 
one way to increase the number of parameters, for a network with odd number of layers (which is true in our case), is to multiply the number of convolutional filters in every other layer by $n$. This increases the number of parameters by exactly $n$. However for higher values of $n$, this method results in a tight bottleneck in the intermediate layers. To alleviate this issue, we use this method only for smaller values of $n$ ($n \in \{1,\cdots,5\}$) and for larger values of $n$, we 
multiply the number of filters in each layer of the base CNN by $\sqrt{n}$ (rounding to the nearest integer).  Multiplying the number of filters in layers $L$ and $L+1$ by $\sqrt{n}$, multiplies the number of {\em parameters} in layer $L+1$ by $n$ (except for the first and last layers).  Since the number of parameters in the first and last layers only increases by a factor of $\sqrt{n}$, we multiply the number of neurons in the penultimate layer by a factor $c \sqrt{n}$ to make the total number of parameters about the same as in the boosted counterpart.  We use $c=1.35$ in this work for CNN base classifiers.  It is likely that the architectures could be optimized to produce even higher accuracy in our experiments, but this is not the point of the current work. The CNN base classifier has $5954$ trainable parameters for CIFAR-10, SVHN ($c=1.05$) and $8834$ trainable parameters for CIFAR-100 ($c=1.35$). Refer to Table \ref{tab:cnn} for exact details of the architecture.

\begin{table}[h]
  \centering
  \scriptsize
  \renewcommand{\arraystretch}{1.0}
  \renewcommand{\tabcolsep}{1.4mm}
  \resizebox{\linewidth}{!}{
  \begin{tabular}{c|cc||c|cc}
  \toprule
  \multicolumn{3}{c||}{\textbf{SVHN \& CIFAR-10}} & \multicolumn{3}{c}{\textbf{CIFAR-100}}\\ \hline
  \textbf{Ensemble} & \multicolumn{2}{c||}{\textbf{Single MLP}} & \textbf{Ensemble} &  \multicolumn{2}{c}{\textbf{Single MLP}} \\ 
   \textbf{No. of Parameters}& \textbf{Hidden Dimensions} & \textbf{No. of Parameters} & \textbf{No. of Parameters} &\textbf{Hidden Dimensions} & \textbf{No. of Parameters}\\
    \midrule
    $410880$ & $[128, 128]$ & $410880$ & $422400$ & $[128, 128]$& $422400$\\
    $821760$ & $[246,246]$ & $818688$ & $844800$ & $[247, 247]$& $844493$\\
    $1232640$ & $[358, 358]$ & $1231520$& $1267200$ & $[358, 358]$& $1263740$\\
    $1643520$ & $[463, 463]$ & $1641335$& $1689600$ & $[464, 464]$& $1687104$\\
    $2054400$ & $[563,  563]$ & $2052135$& $2112000$ & $[565, 565]$& $2111405$\\
    $2465280$ & $[658, 658]$ & $2460920$& $2534400$ & $[661, 661]$& $2533613$\\
    $2876160$ & $[750, 750]$ & $2874000$& $2956800$ & $[753, 753]$& $2955525$\\
    $3287040$ & $[838, 838]$ & $3284960$& $3379200$ & $[841, 841]$& $3374933$\\
    $3697920$ & $[923, 923]$ & $3696615$& $3801600$ & $[927, 927]$& $3799773$\\
    $4108800$ & $[1005, 1005]$ & $4107435$ & $4224000$ & $[1010, 1010]$& $4223820$\\

  \bottomrule
  \end{tabular}
  }
  \caption{Table enumerating hidden dimensions and number of trainable parameters of the ensemble and single MLP classifier on CIFAR-10/SVHN and CIFAR-100.}
  \label{tab:mlp}
\end{table}
\begin{figure}
    \centering
    \includegraphics[height=0.5\linewidth, width=0.6\linewidth]{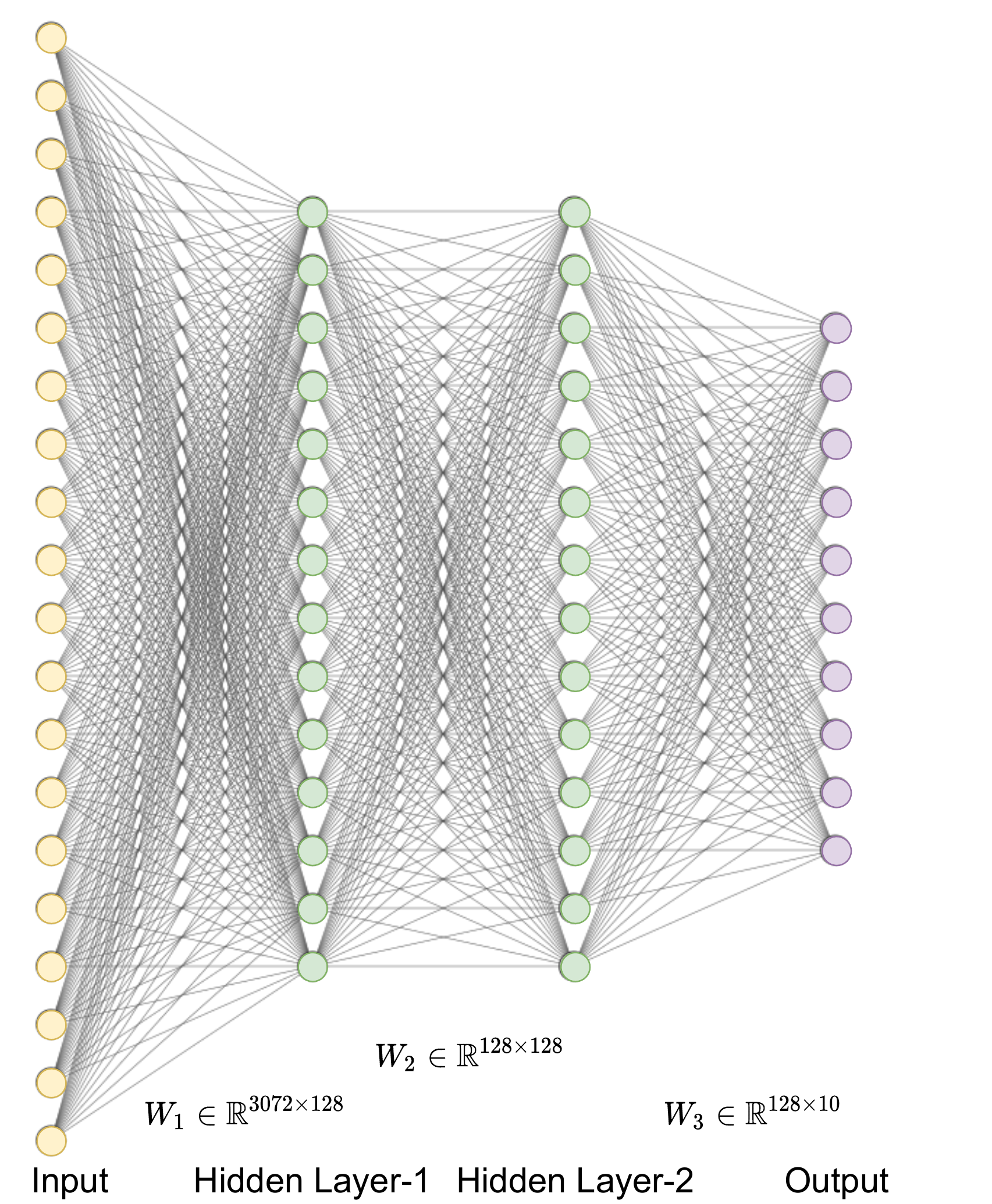}
    \caption{MLP architecture for CIFAR-10/SVHN: We adopt a four-layer MLP with two hidden layers each of dimension $128$. The input layer has $3072$ units. We use ReLU non-linearity after each layer except the final one. }
    \label{fig:mlp_arch}
\end{figure}
\textbf{MLP}: We adopt a four-layer MLP with two hidden layers as shown in Fig. \ref{fig:mlp_arch} as the base learner.
To increase the number of parameters in the MLP to approximately match the number of parameters in each boosted ensemble of MLPs, we assume the same number, $n$, of neurons in both the MLP hidden layers and solve for $n$ analytically.  The total number of parameters in the MLP is $p(n) = 3072*n + n^2 + Cn = n^2 + (3072 + C)*n$, where $C$ is the number of output neurons. We do not consider the bias terms in our experiments. To obtain a network with $N$ parameters, we solve the quadratic equation $p(n) = N$ and round the solution to the lowest integer. 
Each MLP base classifier has $410880$ trainable parameters for CIFAR-10 and SVHN and $422400$ trainable parameters for CIFAR-100 (due to more output classes). Refer to Table \ref{tab:mlp} for exact details of the architecture and number of trainable parameters.

\begin{figure}
    \centering
    \includegraphics[width=\linewidth]{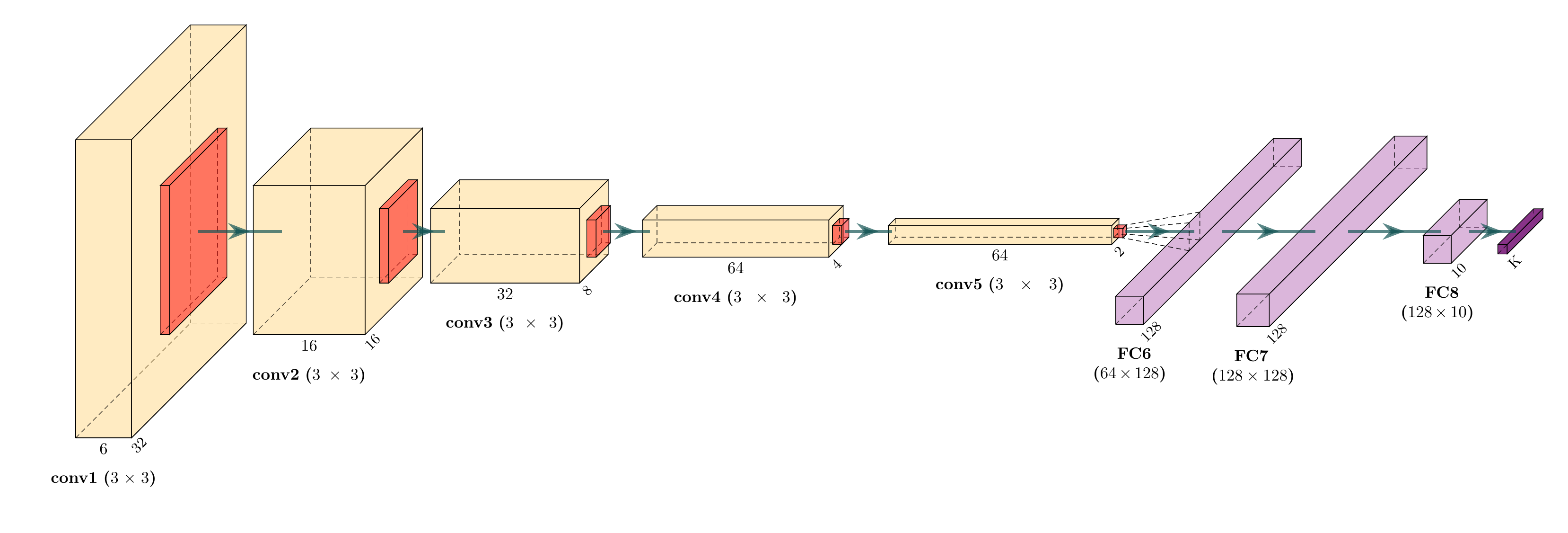}
    \caption{VGG-8 architecture for CIFAR-10/SVHN: We implement a non-standard VGG-8 architecture (without Batch Normalization), which is similar to the standard VGG-11 \cite{Simonyan15} network but with five convolution layers instead of eight. Each convolution layer has $3\times 3$ filters with padding $1$ and stride $1$. The convolution layer is followed by ReLU non-linearity and a max pooling layer with $2\times 2$ filter, no padding and stride 2 resulting in a downsampling factor of 2. The convolution layers are followed by 3 fully connected layers of dimensions 128, 128 and 10. We use ReLU non-linearity after each fc layer except the final one. }
    \label{fig:vgg_arch}
\end{figure}
\textbf{VGG-8}: To demonstrate results using a deeper architecture, we adopt a VGG \cite{Simonyan15} style architecture as the base learner. 
We follow a similar strategy to increase the number of parameters as with the CNN mentioned above. VGG-8 classifier has $87234$ and $98844$ trainable parameters for CIFAR-10 ($c=1.48$) and CIFAR-100 ($c=1.70$), respectively. We show the exact details of the architecture in Table \ref{tab:vgg}.

\begin{table}[h]
  \centering
  \scriptsize
  \renewcommand{\arraystretch}{1.0}
  \renewcommand{\tabcolsep}{1.4mm}
  \resizebox{\linewidth}{!}{
  \begin{tabular}{c|cc||c|cc}
  \toprule
  \multicolumn{3}{c||}{\textbf{CIFAR-10}} & \multicolumn{3}{c}{\textbf{CIFAR-100}}\\ \hline
  \textbf{Ensemble} & \multicolumn{2}{c||}{\textbf{Single VGG-8}} & \textbf{Ensemble} &  \multicolumn{2}{c}{\textbf{Single VGG-8}} \\ 
   \textbf{No. of Parameters}& \textbf{Hidden Dimensions} & \textbf{No. of Parameters} & \textbf{No. of Parameters} &\textbf{Hidden Dimensions} & \textbf{No. of Parameters}\\
    \midrule
    $87234$ & $[6, 16, 32, 64, 64]$ & $87234$ & $98844$ & $[6, 16, 32, 64, 64]$& $98844$\\
    $174648$ & $[8, 22, 45, 90, 133]$ & $190142$ & $197688$ & $[8, 22, 45, 90, 153]$& $220532$\\
    $261972$ & $[10, 27, 55, 110, 164]$ & $272163$& $296532$ & $[10, 27, 55, 110, 188]$& $310629$\\
    $349296$ & $[12, 32, 64, 128, 189]$ & $356215$& $395376$ & $[12, 32, 64, 128, 217]$& $403693$\\
    $436620$ & $[13, 35, 71, 143, 211]$ & $435156$& $494220$ & $[13, 35, 71, 143, 243]$& $492078$\\
    $523944$ & $[14, 39, 78, 156, 232]$ & $516055$& $593064$ & $[14, 39, 78, 156, 266]$& $579787$\\
    $611268$ & $[15, 42, 84, 169, 250]$ & $596331$& $691908$ & $[15, 42, 84, 169, 287]$& $668991$\\
    $698592$ & $[16, 45, 90, 181, 267]$ & $677620$& $790752$ & $[16, 45, 90, 181, 307]$& $759550$\\
    $785916$ & $[18, 48, 96, 192, 284]$ & $761294$& $889596$ & $[18, 48, 96, 192, 326]$& $850898$\\
    $873240$ & $[18, 50, 101, 202, 299]$ & $838108$& $988440$ & $[18, 50, 101, 202, 344]$& $937333$\\

  \bottomrule
  \end{tabular}
  }
  \caption{Table enumerating hidden dimensions and number of trainable parameters of the ensemble and single VGG-8 classifier on CIFAR-10/SVHN and CIFAR-100.}
  \label{tab:vgg}
\end{table}

\textbf{Network Training Parameters}: For all experiments using the CNN base classifier, we use a batch size of 128 with an initial learning rate of $0.1$ ($0.0001$) with SGD (ADAM) optimizer trained for $300$ epochs. The learning rate is decreased by a factor of ten after $95$ epochs for each optimizer. We do not perform any hyperparameter optimization. We boost the base classifier for ten rounds. 
We report results using two optimizers (SGD and Adam) for the CNN classifier and only SGD (which yields better accuracy) for MLP and VGG networks. The MLP classifiers are trained with a batch size of $128$ for $300$ epochs with an initial learning rate of $0.01$ using an SGD optimizer. The learning rate is decreased by a factor of $10$ after $95$ epochs. VGG-8 classifiers are trained for $200$ epochs with a batch size of $256$ using an SGD optimizer. We begin training with a learning rate of $0.1$ and decrease by a factor of $5$ after $60$, $120$ and $160$ epochs. We repeat all experiments in Figure \ref{fig:baby}, \ref{fig:mlp} and \ref{fig:vgg8} for 5 rounds and plot the mean and standard deviation.



\begin{table*}[!htb]
  \begin{minipage}[t]{0.59\linewidth}
    \includegraphics[width=\linewidth]{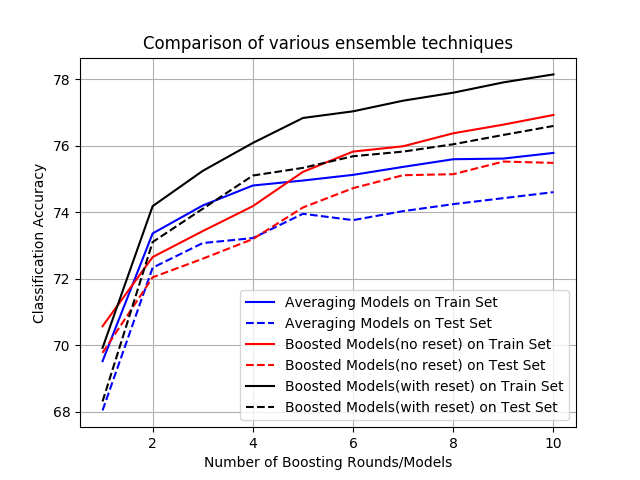}
    \captionsetup{width=.8\linewidth}
    \captionof{figure}{Comparison of averaging and boosting with/without reset of a CNN base classifier on CIFAR-10 train and test sets}
    \label{fig:ensemble}
    \end{minipage}\hfill
    \begin{minipage}[t]{0.41\linewidth}
    \includegraphics[width=\linewidth]{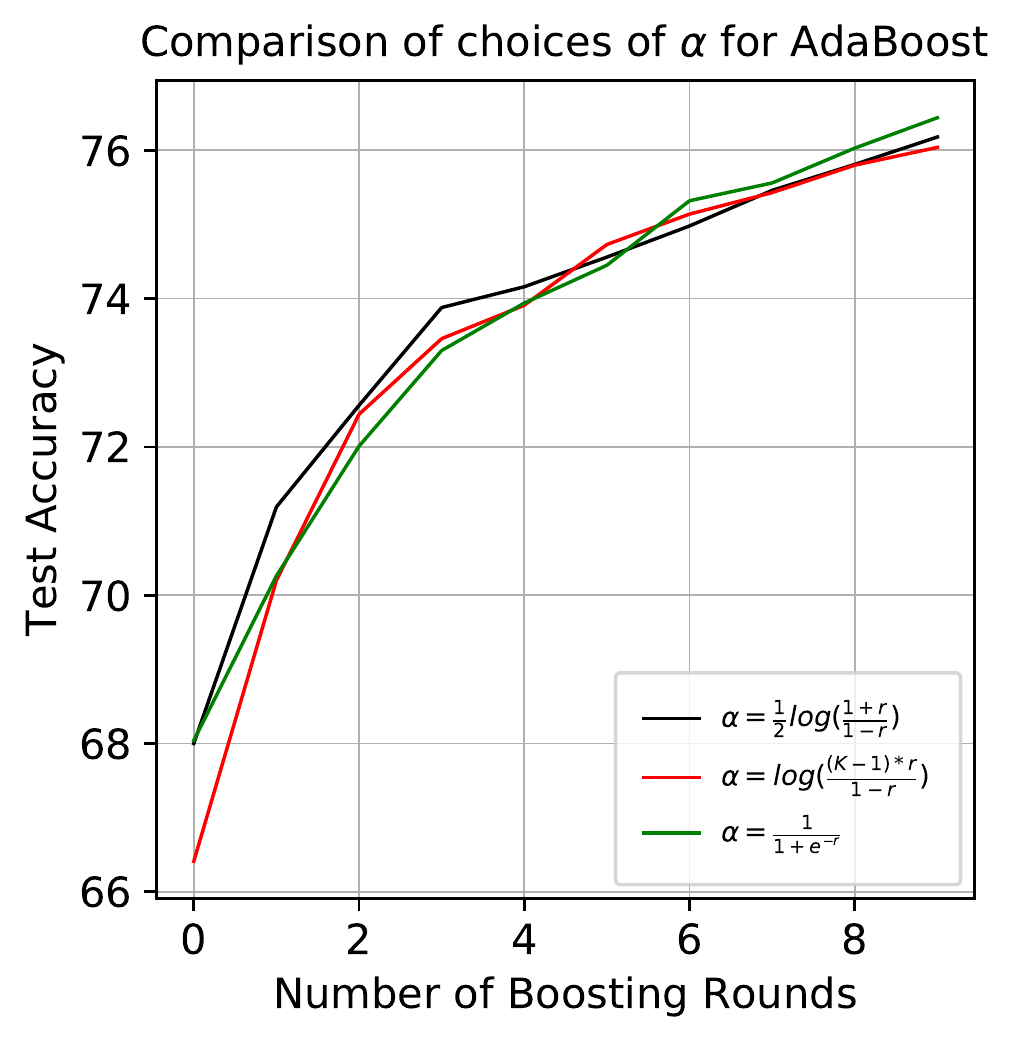}
        \captionsetup{width=.9\linewidth}
    \captionof{figure}{Comparison of different choices of $\alpha$ for AdaBoost on CIFAR-10 test set.}
        \label{fig:alpha_comp}
  \end{minipage}
  \end{table*}



\textbf{Boosting vs Averaging Models: }We start by comparing training and testing set performance of averaging and boosting with/without reset on the CIFAR-10 dataset. Boosting with reset, similar to the one in \cite{MoghimiEtAl2016}, means initializing the models in every round of boosting from scratch. Boosting without reset initializes the model with the weights of the first model. From Figure \ref{fig:ensemble}, it is clear that boosting with reset is most accurate, and that boosting offers improvement over naive averaging of models which is often used as a quick and simple way of improving accuracy.

\textbf{Choice of $\alpha$: } \cite{SchapireSinger1999} prove that choosing $\alpha = \frac{1}{2}\text{ln}(\frac{1+r_t}{1-r_t})$ in step 6 of Algorithm \ref{alg:boost} minimizes the training error. However, in our experiments, we observe empirically that a different choice of $\alpha$ works better for neural networks. In Fig. \ref{fig:alpha_comp}, we compare the CIFAR-10 test set performance of a CNN base classifier boosted using three choices of $\alpha$ for 10 rounds using the multi-class Adaboost described in Algorithm \ref{alg:boost}. We experiment with 1) $\alpha=\frac{1}{2}\text{log}(\frac{1+r_t}{1-r_t})$ originally proposed in \cite{SchapireSinger1999}, 2) $\alpha=\text{log}(\frac{(K-1)*r_t}{1-r_t})$ ($K$ is the number of classes) as proposed in \cite{Hastie2009MulticlassA} for another multi-class boosting algorithm SAMME and 3) $\alpha = \frac{1}{1+e^{-r_t}}$. From the figure, choosing $\alpha = \frac{1}{1+e^{-r_t}}$ performs better than $\frac{1}{2}\text{log}(\frac{1+r_t}{1-r_t})$ for higher rounds of boosting. Unless otherwise mentioned, we use $\frac{1}{1+e^{-r_t}}$ for AdaBoost for the rest of the experiments.

\begin{figure*}[h]   
\centering

\subfloat[]{\includegraphics[width=0.31\textwidth, keepaspectratio]{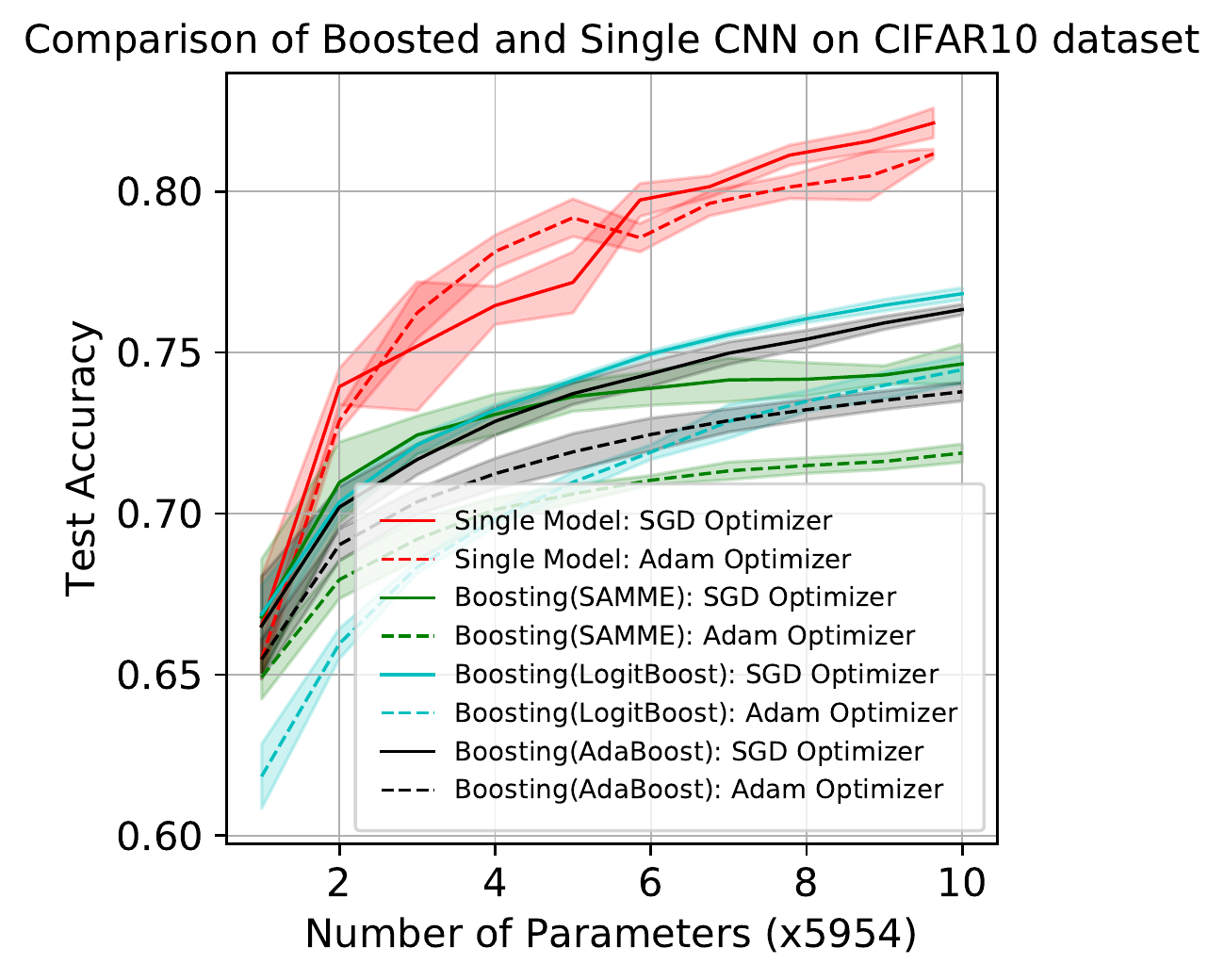}\label{fig:baby-cifar10}}
\subfloat[]{\includegraphics[width=0.32\textwidth, keepaspectratio]{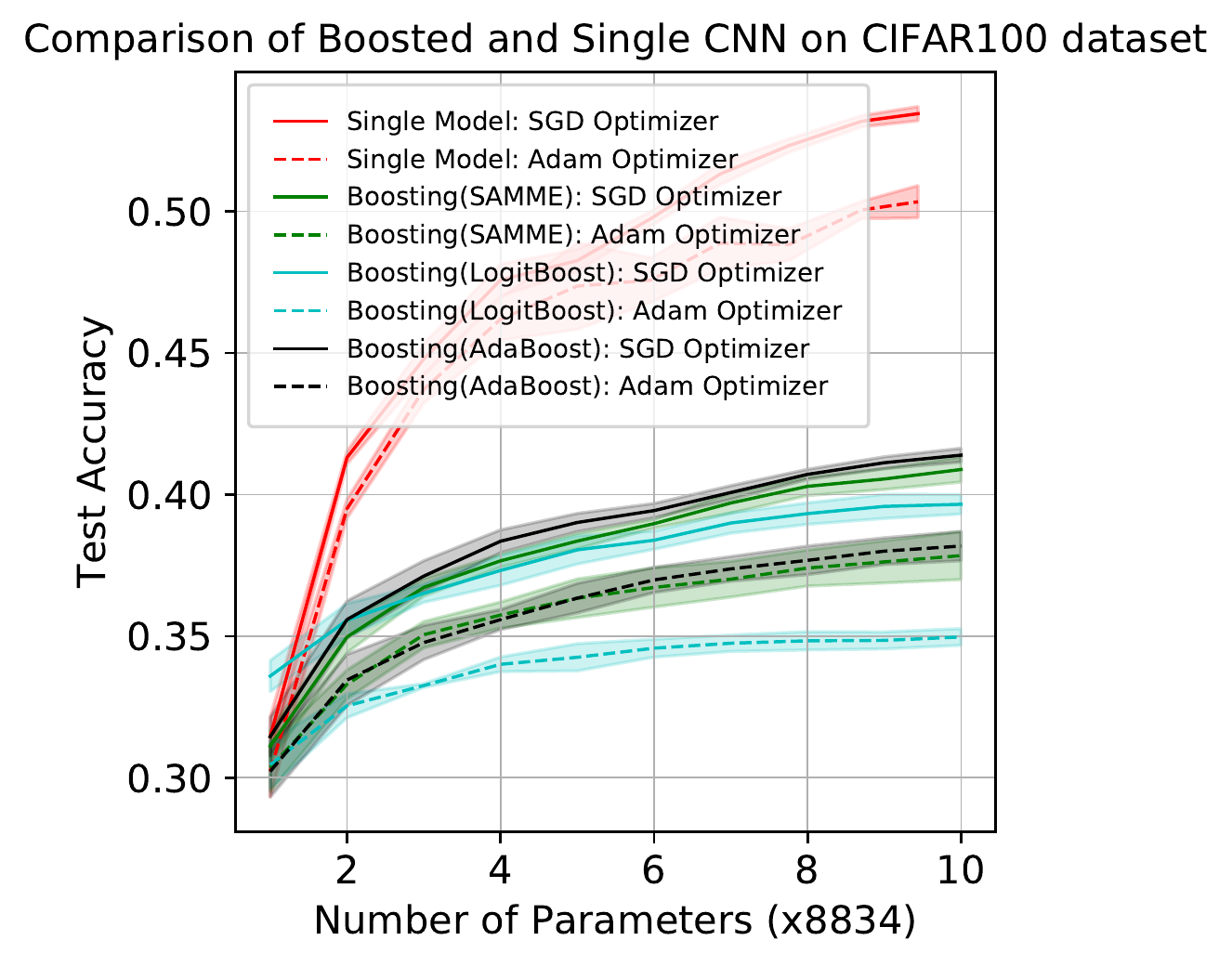}\label{fig:baby-cifar100}}
\subfloat[]{\includegraphics[width=0.30\textwidth, keepaspectratio]{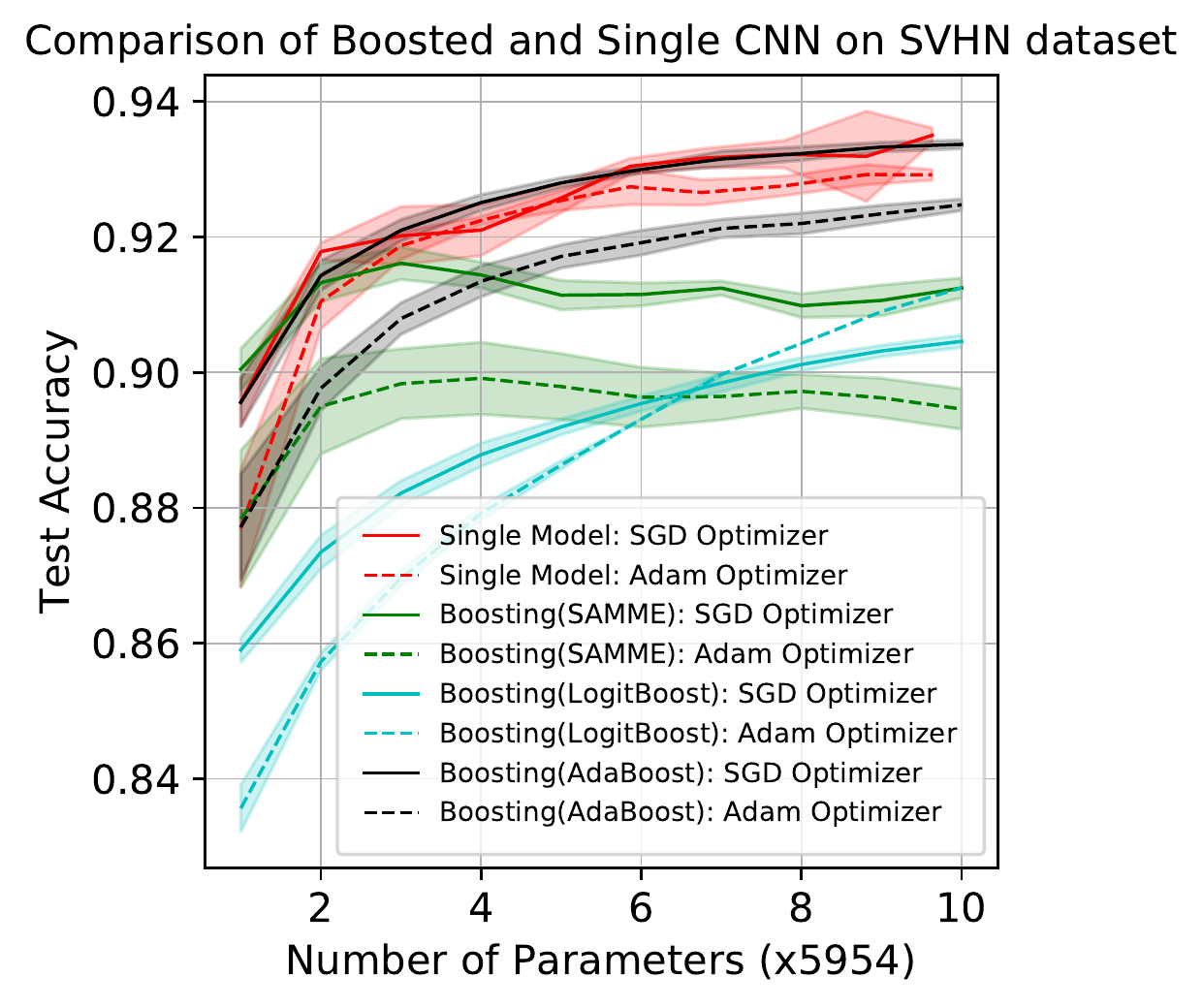}\label{fig:baby-svhn}}

\caption[Optional caption for list of figures 5-8]{Classification accuracy on CIFAR-10 [\ref{fig:baby-cifar10}], CIFAR-100 [\ref{fig:baby-cifar100}] and SVHN [\ref{fig:baby-svhn}] testing sets for single and boosted CNN classifiers versus number of parameters.}
\label{fig:baby}
\end{figure*}
\subsection{Main Experiments}
\textbf{Boosting Algorithms:}
In this section, we experiment with three boosting algorithms namely AdaBoost, SAMME~\cite{Hastie2009MulticlassA} and LogitBoost~\cite{FriedmanEtAl2000}. While we re-implement the exact algorithm of SAMME~\cite{Hastie2009MulticlassA} we had to make a few modifications to the LogitBoost algorithm~\cite{FriedmanEtAl2000} to work with a Neural Network base classifier. LogitBoost~\cite{FriedmanEtAl2000} requires training a regression model. We use the Mean Squared Error (MSE) loss for CIFAR-10 and SVHN datasets. However, for more than 10 classes, in the case of CIFAR-100, this training loss didn't converge. For CIFAR-100, we train a classification network using soft labels instead of one-hot-encoded targets. This can be achieved by minimizing the KL-Divergence between the soft labels and the network outputs. Additionally, for rounds greater than 1, we clamp the output of the networks to be no greater than 3. 

\subsubsection{CNN experiments}
Figure \ref{fig:baby} shows results comparing a series of single CNNs with their Adaboost, SAMME and LogitBoost ensemble counterparts on CIFAR-10, CIFAR-100 and SVHN using the SGD and ADAM optimizers. The solid lines are results with the SGD optimizer while the dotted lines are for ADAM. Red lines show results for a single network and black, green and cyan lines depict results of boosting using AdaBoost, SAMME and LogitBoost algorithms respectively.
Using AdaBoost, a single CNN classifier (with equal number of parameters) outperforms the boosted ensemble after 10 rounds of boosting by $5.8$, $12$ and $0.1$ percentage points on CIFAR-10, CIFAR-100 and SVHN, respectively, using an SGD optimizer. Using ADAM, the single classifier is better than the boosted ensemble by $7.5$, $14$ and $0.5$ percentage points on CIFAR-10, CIFAR-100 and SVHN, respectively.  

Using SAMME, a single CNN classifier (with equal number of parameters) outperforms the boosted ensemble after 10 rounds of boosting by $7.49$ and $12.57$ percentage points on CIFAR-10 and CIFAR-100 respectively using an SGD optimizer. Using ADAM, the single classifier is better than the boosted ensemble by $9.28$ and $12.50$ percentage points on CIFAR-10 and CIFAR-100 respectively. On the SVHN dataset, the SAMME algorithm fails after two rounds because neural network training fails to find a classifier with weighted error better than chance. We think this is due to the nature of the discrete margin computed in SAMME which puts almost all of the weight on the most difficult examples. 

Using LogitBoost, a single CNN classifier (with equal number of parameters) outperforms the boosted ensemble after 10 rounds of boosting by $5.30$, $13.79$ and $3.04$ percentage points on CIFAR-10, CIFAR-100 and SVHN, respectively, using an SGD optimizer. Using ADAM, the single classifier is better than the boosted ensemble by $6.70$, $15.37$ and $1.66$ percentage points on CIFAR-10, CIFAR-100 and SVHN, respectively.

\begin{figure*}[h]
\centering
\subfloat[]{\includegraphics[width=0.32\textwidth, keepaspectratio]{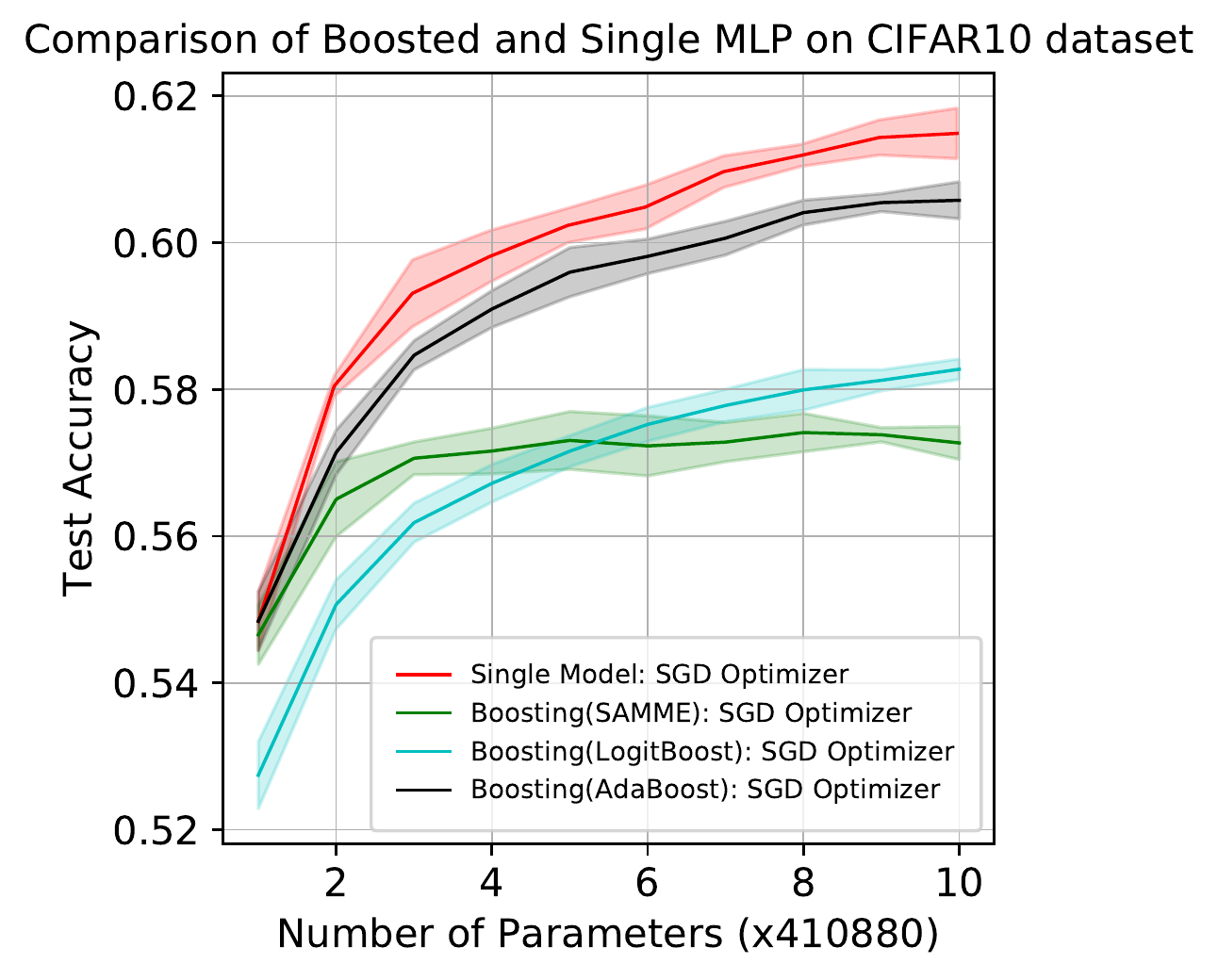}\label{fig:mlp-cifar10}}
\subfloat[]{\includegraphics[width=0.33\textwidth, keepaspectratio]{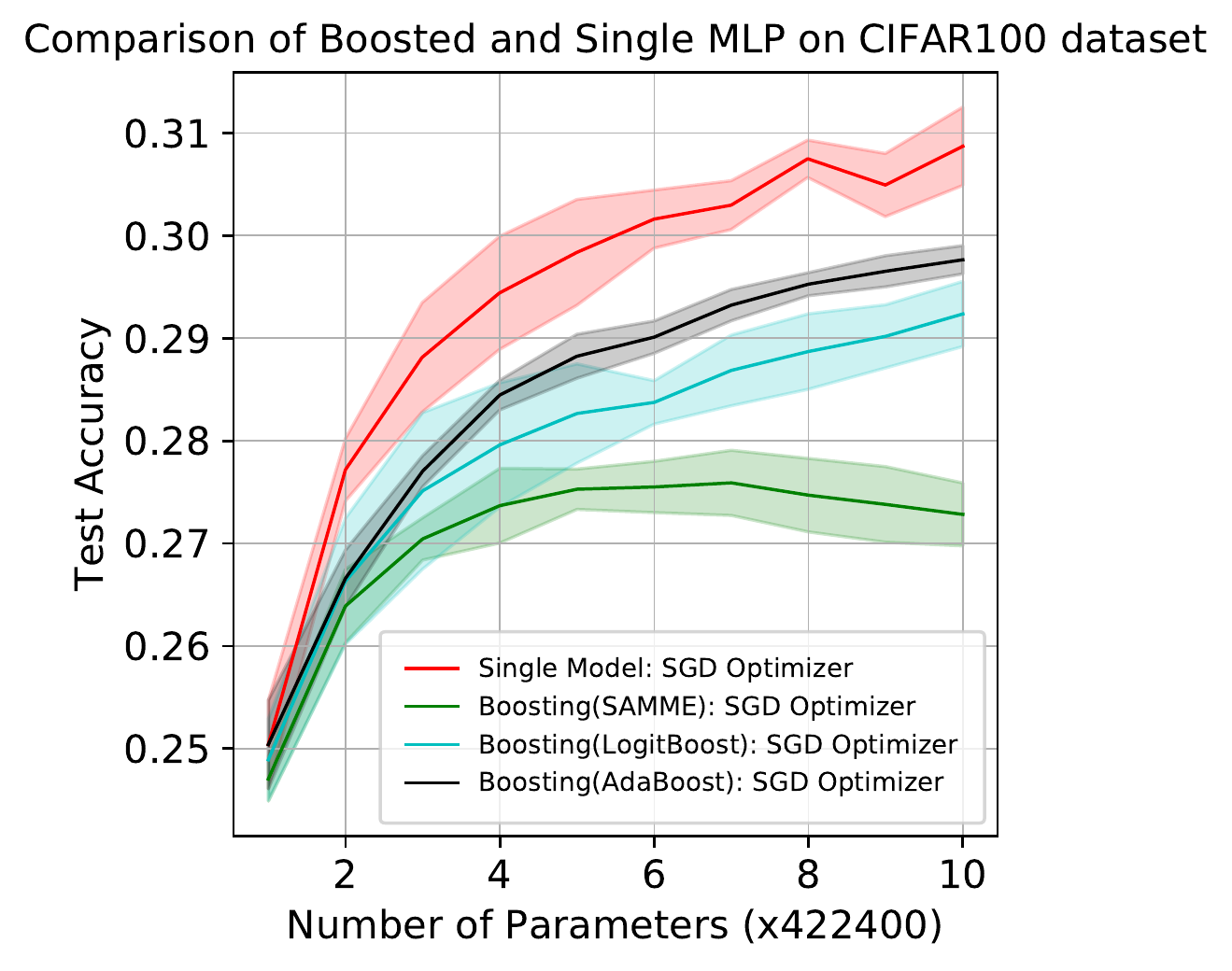}\label{fig:mlp-cifar100}}
\subfloat[]{\includegraphics[width=0.31\textwidth, keepaspectratio]{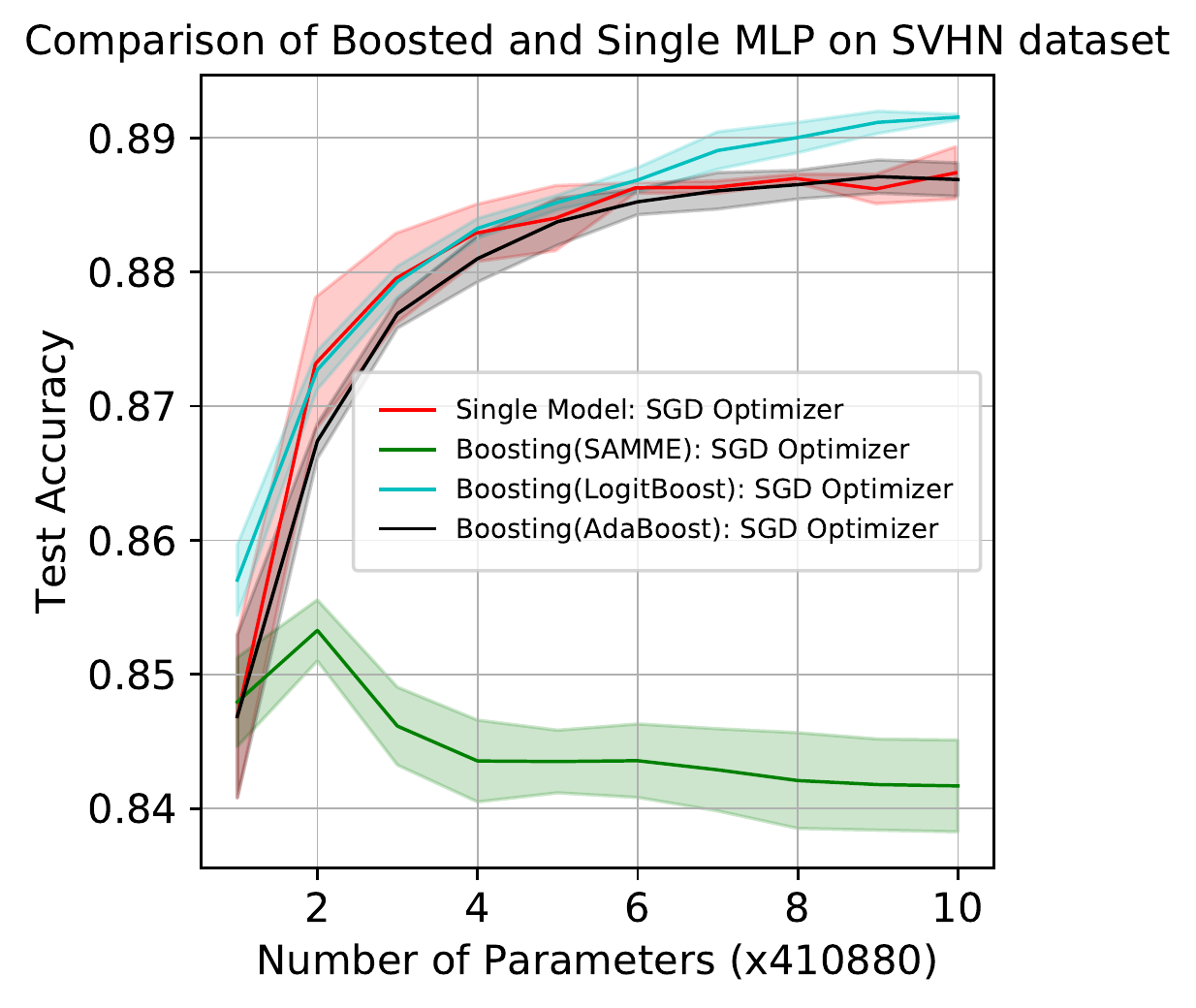}\label{fig:mlp-svhn}}

\caption[Optional caption for list of figures 5-8]{Classification accuracy on CIFAR-10 [\ref{fig:mlp-cifar10}], CIFAR-100 [\ref{fig:mlp-cifar100}] and SVHN [\ref{fig:mlp-svhn}] testing sets for a single and boosted MLP classifiers versus number of parameters.}
\label{fig:mlp}
\end{figure*}
\subsubsection{MLP experiments}
Figure \ref{fig:mlp} shows results of single MLPs and their boosted counterparts, trained using Adaboost, SAMME and LogitBoost, on each of the three test sets.  The single networks are more accurate than the equivalent boosted ensemble of networks for almost all values of the number of parameters.  After 10 rounds of boosting, using Adaboost, the equivalent single networks are about 1\%, 1.2\% and 0.2\% more accurate on CIFAR-10, CIFAR-100 and SVHN, respectively. Using SAMME, the single networks are about $4.22\%$ and $3.59\%$ more accurate than the boosted counterparts on CIFAR-10 and CIFAR-100 test sets respectively. Similar to the CNN architecture, the boosted MLP ensemble failed to converge on SVHN. Finally, using LogitBoost, the single networks are about $3.22\%$ and $1.64\%$ more accurate than the boosted counterparts on CIFAR-10 and CIFAR-100 test sets respectively. On SVHN, we observe that the ensemble trained using LogitBoost outperforms the single MLP architecture by 0.46\%. This is the only case out of 3 architectures and 3 datasets where a boosted ensemble outperformed a single neural network. We believe this is the case because the inductive bias of MLPs are not as suitable for structured data like images resulting in lower performance than the boosted ensembles. This can also be seen in the comparison of MLP and CNN single models on SVHN (88.74\% vs 93.50\%).

\begin{figure*}[h]   
\centering
\subfloat[]{\includegraphics[width=0.49\textwidth, keepaspectratio]{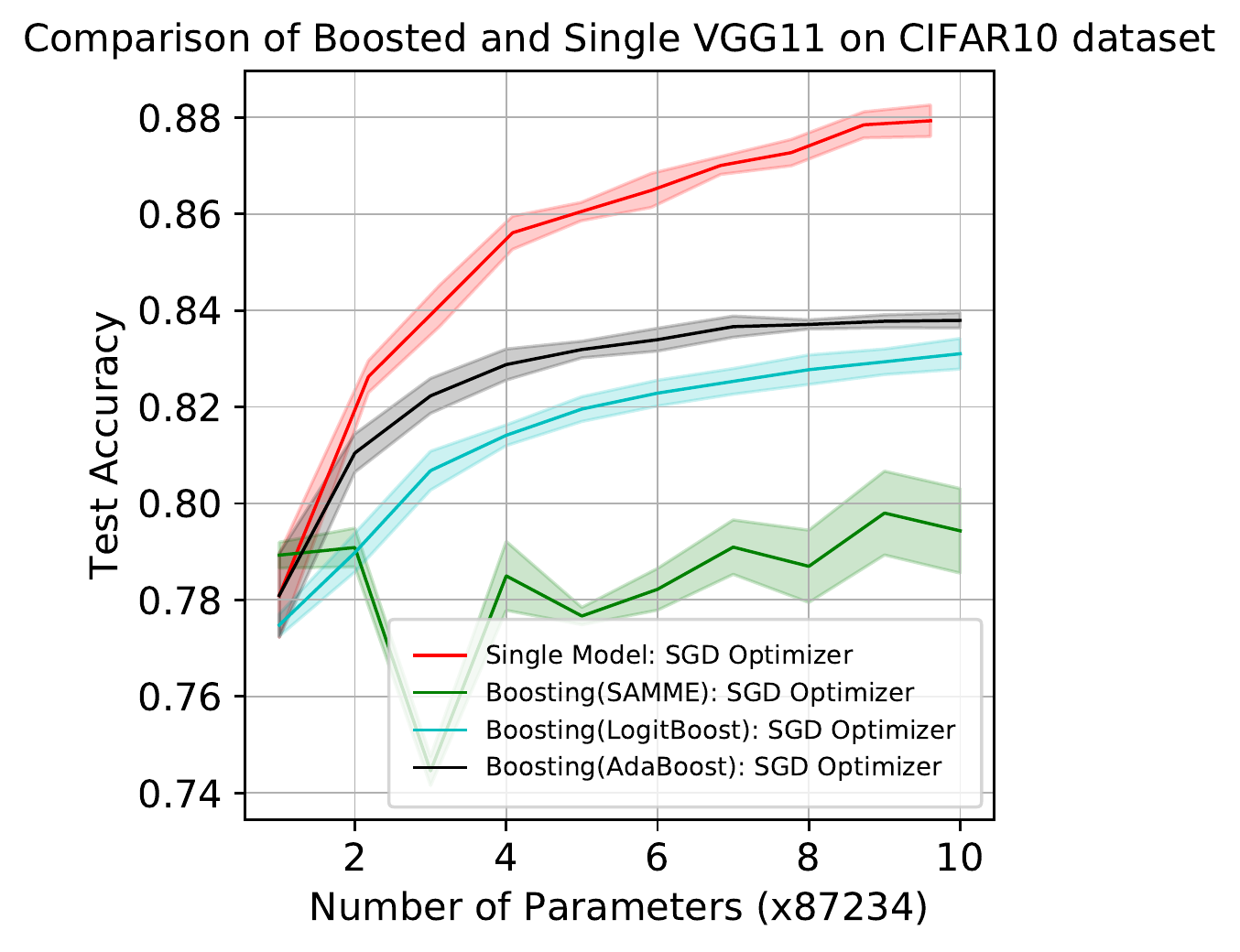}\label{fig:vgg8-cifar10}}
\subfloat[]{\includegraphics[width=0.5\textwidth, keepaspectratio]{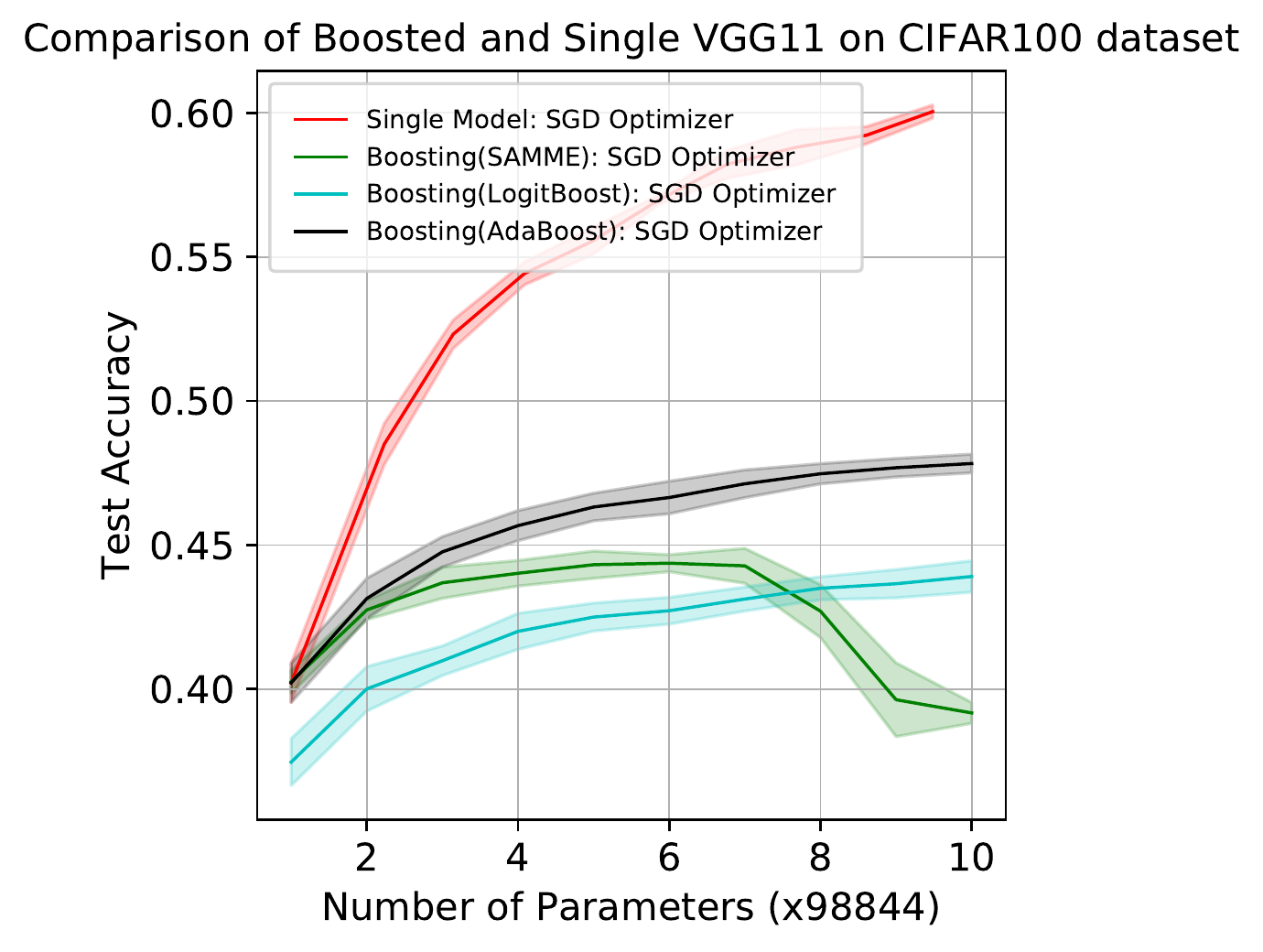}\label{fig:vgg8-cifar100}}
\caption[Optional caption for list of figures 5-8]{Classification accuracy on CIFAR-10 [\ref{fig:vgg8-cifar10}] and CIFAR-100 [\ref{fig:vgg8-cifar100}] testing sets for a VGG-8 single and boosted classifiers versus number of parameters.}
\label{fig:vgg8}
\end{figure*}

\textbf{VGG experiments: }Figure \ref{fig:vgg8} shows results comparing single VGG-8 classifiers with the boosted ensembles on CIFAR-10 and CIFAR-100. We do not show results for VGG-8 on SVHN as the single base classifier attains $100\%$ training accuracy, leaving little room for improvement for either boosting or larger networks.  It is clear from the plots that the single models outperform the boosted ensembles, trained using Adaboost, by a significant margin ($4$ and $12$ percentage points after 10 boosting rounds on CIFAR-10 and CIFAR-100, respectively). Using LogitBoost, the single models outperform the boosted ensemble by $4.83\%$ and $16.14\%$ points on CIFAR-10 and CIFAR-100 test sets. We observe a similar pattern with the boosted ensembles trained using SAMME algorithm as with the other two architectures above.


\textbf{Boosting rounds: }Finally, to verify that boosting a larger ensemble would not change the results, we boost a CNN base classifier for $50$ rounds ($297700$ parameters) on CIFAR-10 and compare it with a single VGG-8 classifier enlarged to $297680$ parameters.  The boosted ensemble achieves $77.00\%$ accuracy on the test set while the single network achieves $85.81\%$. We compare an ensemble of CNN classifiers to a single VGG-8 network because of the difficulty of increasing the number of parameters in the CNN base classifier by 50 times to create a single network.

\textbf{Final thoughts: }These experiments show that boosting neural networks, unlike boosting decision trees, does not lead to better accuracy compared to simply using a larger neural network.  One caveat to this statement that "bigger is better" for neural networks is that training set size and difficulty also matter.  In cases where a single neural network is able to achieve near 100\% accuracy on the training set, we observed that adding more parameters to the single network can lead to either overfitting or test error just not improving as much as it does using boosting.  In these cases, boosting is not a big improvement over training a single larger network, but neither is it true that a single large network is an improvement over a boosted ensemble.






\section{Discussion and Conclusions}
\label{sec:conclusions}


Through theoretical arguments and empirical evidence, we have demonstrated some limits of boosted ensembles of neural networks.  In particular, this research suggests that larger networks often yield higher accuracy than ensembles of smaller networks with the same total number of parameters.  This does not imply that boosting does not work with neural nets.  It does.  Boosted ensembles of neural nets have higher accuracy than any single neural net in the ensemble.  However, it is likely that a single, large neural net would have even greater accuracy.

An important reason for why boosted CNNs do not produce greater accuracy gains is because linear combinations of CNNs do not add any new expressive power over a single (larger) CNN, unlike boosted decision trees.  Another way to view this fact is to think of a single CNN as an ensemble in itself.  To see this, consider a
convolution layer with $K$ input channels, and $D$ filters ($w_d$, $d\in\{1,2,\cdots, D\}$).  The output of the layer can be written as
\begin{equation}
    O[d] = \sum_{k=1}^{K} w_d[k] * I[k] \quad d \in \{1,2,\cdots, D\}
\end{equation}
where $*$ is the convolution operator, $[.]$ is the channel slicing, $w_d$ is the $d^{th}$ filter and $O$, $I$ are the output and input of the layer respectively. This is an ensemble over feature channels analogous to the score ensemble used in boosting.
In a similar fashion, the matrix vector product in multilayer perceptrons (MLP) (or a fully connected layer in a CNN) can be viewed as an ensemble over features. 
This view of neural networks as ensembles may provide another way of understanding the robustness of overparameterized networks against overfitting.  The same theory that has been developed for understanding the robustness of classifier ensembles against overfitting can be applied to single neural networks.

\vspace{.1in}
\noindent
This research did not receive any specific grant from funding agencies in the public, commercial, or not-for-profit sectors.

\bibliography{main}
\end{document}